\documentclass{article}

\usepackage[accepted]{icml2025}

\usepackage{microtype}
\usepackage{graphicx}
\usepackage{subfigure}

\usepackage{natbib}

\usepackage[utf8]{inputenc}
\usepackage{amsmath,amsfonts}
\usepackage{amssymb}
\usepackage{graphicx}
\usepackage{tikz}
\usepackage{mathtools}
\usepackage{amsthm}
\usepackage{nccmath}
\usepackage{xspace}
\usepackage{xcolor}
\usepackage{booktabs}
\usepackage{array}
\usepackage{subcaption}
\definecolor{MyBlue}{rgb}{0.12, 0.25, 0.67}
\definecolor{sbBlue}{RGB}{31,119,180}
\usepackage[colorlinks,allcolors=MyBlue]{hyperref}
\usepackage{pifont}

\usepackage[group-separator={,}]{siunitx}
\usepackage{url}
\usepackage{enumitem}
\usepackage{pdflscape}
\usepackage{verbatim}
\usepackage{arydshln}
\usepackage{multirow}
\usepackage{bigdelim}
\usepackage{float}
\usepackage{pgfplots}
\usepackage{cleveref}
\usepackage[font=small]{caption}

\usetikzlibrary{shadows,arrows,decorations,decorations.shapes,backgrounds,shapes,snakes,automata,fit,petri,shapes.multipart,calc,positioning,shapes.geometric,graphs,graphs.standard,plotmarks,math,arrows.meta}
\usepackage{tikz-qtree}

\theoremstyle{plain}
\newtheorem{theorem}{Theorem}[section]
\newtheorem{proposition}[theorem]{Proposition}
\newtheorem{lemma}{Lemma}[section]

\theoremstyle{definition}
\newtheorem{definition}[theorem]{Definition}

\theoremstyle{remark}

\usepackage{thmtools}
\usepackage{thm-restate}

\usepackage{newfloat}
\DeclareFloatingEnvironment[
    fileext=los,
    listname={List of Constructions},
    name=Construction,
    placement=Htbhp,
]{construction}


\renewcommand{\algorithmicendfor}{\vspace{-1.2em}}%
\renewcommand{\algorithmicendif}{\vspace{-1.2em}}%
\renewcommand{\algorithmicendfunction}{\vspace{-1.2em}}%

\DeclareMathOperator*{\argmax}{arg\,max}
\DeclareMathOperator*{\E}{\mathbb{E}}
\DeclareMathOperator*{\argmin}{arg\,min}
\DeclareMathOperator*{\len}{len}

\DeclareMathOperator*{\vol}{vol}
\DeclareMathOperator*{\diam}{diam}
\allowdisplaybreaks 

\newcommand\bbr{\mathbb{R}}

\newcommand\bbrspos{\mathbb{R}_{> 0}}
\newcommand\bbn{\mathbb{N}}
\newcommand\ep{\varepsilon}
\newcommand\hata{\tilde{y}}

\newcommand\x{\boldsymbol{a}}
\newcommand\A{\boldsymbol{y}}
\newcommand\s{\boldsymbol{x}}
\newcommand\X{\mathcal{X}}
\newcommand\Y{\mathcal{Y}}

\newcommand\mufx{\mu_{f,\x}}
\newcommand\bfone{\mathbf{1}}
\newcommand\B{\mathcal{B}}

\newcommand\D{\mathcal{D}}
\newcommand\norm[1]{||#1||}

\newcommand\hats{\hat{X}}
\newcommand\M{\mathcal{M}}
\newcommand\bfmu{\boldsymbol{\mu}}
\newcommand\tilpi{\tilde{\Pi}}
\newcommand\stilpi{\tilde{\pi}}

\newcommand\bfd{\boldsymbol{\mathcal{D}}}

\newcommand\mum{\mu_0^m}
\newcommand\rplus{R_T^{+}}
\newcommand\rmul{R_T^{\times}}

\interfootnotelinepenalty=10000



\icmltitlerunning{Avoiding Catastrophe in Online Learning by Asking for Help}

\begin{document}

\twocolumn[
\icmltitle{Avoiding Catastrophe in Online Learning by Asking for Help}


\icmlsetsymbol{equal}{*}

\begin{icmlauthorlist}
\icmlauthor{Benjamin Plaut}{ucb}
\icmlauthor{Hanlin Zhu}{ucb}
\icmlauthor{Stuart Russell}{ucb}
\end{icmlauthorlist}

\icmlaffiliation{ucb}{University of California, Berkeley}

\icmlcorrespondingauthor{Benjamin Plaut}{plaut@berkeley.edu}
\icmlkeywords{online learning, AI safety, asking for help, irreversibility}

\vskip 0.3in
]

\printAffiliationsAndNotice{}


\setcounter{footnote}{0} 

\begin{abstract}
Most learning algorithms with formal regret guarantees assume that all mistakes are recoverable and essentially rely on trying all possible behaviors. This approach is problematic when some mistakes are \emph{catastrophic}, i.e., irreparable. We propose an online learning problem where the goal is to minimize the chance of catastrophe. Specifically, we assume that the payoff in each round represents the chance of avoiding catastrophe in that round and try to maximize the product of payoffs (the overall chance of avoiding catastrophe) while allowing a limited number of queries to a mentor. We also assume that the agent can transfer knowledge between similar inputs. We first show that in general, any algorithm either queries the mentor at a linear rate or is nearly guaranteed to cause catastrophe. However, in settings where the mentor policy class is learnable in the standard online model, we provide an algorithm whose regret and rate of querying the mentor both approach 0 as the time horizon grows. Although our focus is the product of payoffs, we provide matching bounds for the typical additive regret. Conceptually, if a policy class is learnable in the absence of catastrophic risk, it is learnable in the presence of catastrophic risk if the agent can ask for help.\looseness=-1
\end{abstract}

\section{Introduction}\label{sec:intro}

There has been mounting concern over catastrophic risk from AI, including but not limited to autonomous weapon accidents \citep{Abaimov2020}, bioterrorism \citep{mouton2024operational}, cyberattacks on critical infrastructure \citep{guembe_emerging_2022}, and loss of control \citep{bengio2024managing}. See \citet{critch2023tasra} and \citet{hendrycks2023overviewcatastrophicairisks} for taxonomies of societal-scale AI risks. In this paper, we use ``catastrophe'' to refer to any kind of irreparable harm. In addition to the large-scale risks above, our definition also covers smaller-scale (yet still unacceptable) incidents such as serious medical errors \citep{rajpurkar2022ai}, crashing a robotic vehicle \citep{kohli2020enabling}, and discriminatory sentencing \citep{villasenor2020artificial}. 

The gravity of these risks contrasts starkly with the dearth of theoretical understanding of how to avoid them. Nearly all of learning theory explicitly or implicitly assumes that no single mistake is too costly. We focus on \emph{online learning}, where an agent repeatedly interacts with an unknown environment and uses its observations to gradually improve its performance. Most online learning algorithms essentially try all possible behaviors and see what works well. We do not want autonomous weapons or surgical robots to try all possible behaviors.\looseness=-1


More precisely, trial-and-error-style algorithms only work when catastrophe is assumed to be impossible. This assumption can take multiple forms, such as that the agent's actions do not affect future inputs (e.g., \citealp{slivkins_contextual_2011}), that no action has irreversible effects (e.g., \citealp{jaksch_near-optimal_2010}) or that the environment is reset at the start of each ``episode'' (e.g., \citealp{azar_minimax_2017}). One could train an agent entirely in a controlled lab setting where one of those assumptions does hold, but we argue that sufficiently general agents will inevitably encounter novel scenarios when deployed in the real world. Machine learning models often behave unpredictably in unfamiliar environments (see, e.g., \citealp{quinonero2022dataset}), and we do not want AI biologists or robotic vehicles to behave unpredictably.


The goal of this paper is to understand the conditions under which it is possible to formally guarantee avoidance of catastrophe in online learning. Certainly some conditions are necessary, because the problem is hopeless if the agent must rely purely on trial-and-error: any untried action could lead to paradise or disaster and the agent has no way to predict which. In the real world, however, one need not learn through pure trial-and-error: one can also ask for help. We think it is critical for high-stakes AI applications to employ a designated supervisor who can be asked for help. Examples include a human doctor supervising AI doctors, a robotic vehicle with a human driver who can take over in emergencies, autonomous weapons with a human operator, and many more. We hope that our work constitutes a step in the direction of practical safety guarantees for such applications.\looseness=-1

\subsection{Our model}

We propose an online learning model of avoiding catastrophe with mentor help. On each time step, the agent observes an input, selects an action or queries the mentor, and obtains a payoff. Each payoff represents the probability of avoiding catastrophe on that time step (conditioned on no prior catastrophe). The agent's goal is to maximize the \emph{product} of payoffs, which is equal to the overall probability of avoiding catastrophe.\footnote{Conditioning on no prior catastrophe means we do not need to assume that these probabilities are independent (and if
catastrophe has already occurred, this time step does not matter). This is due to the chain rule of probability.} As is standard in online learning, we consider the product of payoffs obtained while learning, not the product of payoffs of some final policy.



The (possibly suboptimal) mentor has a stationary policy, and when queried, the mentor illustrates their policy's action for the current input. We want the agent's regret -- defined as the gap between the mentor's performance and the agent's performance -- to go to zero as the time horizon $T$ grows. In other words, with enough time, the agent should avoid catastrophe nearly as well as the mentor. We also expect the agent to become self-sufficient over time: the number of queries to the mentor should be sublinear in $T$, or equivalently, the rate of querying the mentor should go to zero.

\subsection{Our assumptions}

The agent needs some way to make inferences about unqueried inputs in order to decide when to ask for help. Much past work has used Bayesian inference, which suffers from tractability issues in complex environments.\footnote{For the curious reader, \citet{betancourt_conceptual_2018} provides a thorough treatment. See also Section~\ref{sec:related}.} We instead assume that the mentor policy satisfies a novel property that we call \emph{local generalization}: informally, if the mentor told us that an action was safe for a similar input, then that action is probably also safe for the current input. For example, if it is safe to ignore a 3 mm spot on an X-ray, it is likely (but not certainly) also safe to ignore a 3.1 mm spot with the same density, location, etc. Unlike Bayesian inference, local generalization only requires computing distances and is compatible with any input space that admits a distance metric. See \Cref{sec:lg} for further discussion of local generalization. \looseness=-1

Unlike the standard online learning model, we assume that the agent does not observe payoffs. This is because the payoff in our model represents the chance of avoiding catastrophe on that time step. In the real world, one only observes whether catastrophe occurred, not its probability.\footnote{One may be able to detect ``close calls'' in some cases, but observing the precise probability seems unrealistic.}

\subsection{Standard online learning}\label{sec:online}

To properly understand our results, it is important to understand standard online learning. In the standard model, the agent observes an input on each time step and must choose an action. An adversary then reveals the correct action, which results in some payoff to the agent. The goal is sublinear regret with respect to the sum of payoffs, or equivalently, the average regret per time step should go to 0 as $T \to \infty$. \Cref{fig:compare} delineates the precise differences between the standard model and our model.

If the adversary's choices are unconstrained, the problem is hopeless: if the adversary determines the correct action on each time step randomly and independently, the agent can do no better than random guessing. However, sublinear regret becomes possible if (1) the hypothesis class has finite Littlestone dimension \citep{littlestone1988learning}, or (2) the hypothesis class has finite VC dimension \citep{vapnik_uniform_1971} and the input is \emph{$\sigma$-smooth}\footnote{Informally, the adversary chooses a distribution over inputs instead of a precise input. See \Cref{sec:model} for the formal definition.} \citep{haghtalab2024smoothed}.

The goal of sublinear regret in online learning implicitly assumes catastrophe is impossible: the agent can make arbitrarily many (and arbitrarily costly) mistakes as long as the \emph{average} regret per time step goes to 0. In contrast, we demand \emph{subconstant} regret: the \emph{total} probability of catastrophe should go to 0. Furthermore, standard online learning allows the agent to observe payoffs on every time step, while our agent only receives feedback on time steps with queries. However, the combination of a mentor and local generalization allows our agent to learn without trying actions directly, which is enough to offset all of the above disadvantages. 

\begin{table}
\caption{Comparison between the standard online learning model and our model.}
\label{fig:compare}
\centering
\renewcommand{\arraystretch}{1.3}
\resizebox{\linewidth}{!}{%
\begin{tabular}{c|cc}
& Standard model & Our model\\
\hline
Objective & Sum of payoffs & Product of payoffs\\
Regret goal & Sublinear & Subconstant\\
Feedback & Every time step & Only from queries\\
Mentor & No & Yes\\
\hspace{-.1 in} Local generalization & No & Yes
\end{tabular}
}
\renewcommand{\arraystretch}{1}
\end{table}

\subsection{Our results}\label{sec:results}

At a high level, we show that avoiding catastrophe with the help of a mentor and local generalization is no harder than online learning without catastrophic risk.

We first show that in general, any algorithm with sublinear queries to the mentor has unbounded regret in the worst-case (\Cref{thm:neg}). As a corollary, even when the mentor can avoid catastrophe with certainty, any algorithm either needs extensive supervision or is nearly guaranteed to cause catastrophe (\Cref{cor:neg-prob}).



Our primary result is a simple algorithm whose total regret and rate of querying the mentor both go to 0 as $T \to\infty$ when either (1) the mentor policy class has finite Littlestone dimension or (2) the mentor policy class has finite VC dimension and the input sequence is $\sigma$-smooth (\Cref{thm:pos-nd}). Conceptually, the algorithm has two components: (1) for ``in-distribution'' inputs, run a standard online learning algorithm  (adjusted to account for only receiving feedback in response to queries), and (2) for ``out-of-distribution'' inputs, ask for help. Our algorithm can handle an unbounded input space and does not need to know the local generalization constant.\looseness=-1


Although we focus on the product of payoffs, we show that the results above (both positive and negative) hold for the typical additive regret as well. In fact, we show that multiplicative regret and additive regret are tightly related in our setting (\Cref{lem:prod-vs-add}).

In summary, the combination of local generalization and a mentor allows us to reduce the regret by an entire factor of $T$, resulting in subconstant regret (multiplicative or additive) instead of the typical sublinear regret.






\section{Related work}\label{sec:related}

\textbf{Learning with irreversible costs.} Despite the ubiquity of irreversible costs in the real world, theoretical work on this topic remains limited. This may be due to the fundamental modeling question of how the agent should learn about novel inputs or actions without trying them directly.\looseness=-1

The most common approach is to allow the agent to ask for help. This alone is insufficient, however: the agent must have some way to decide \emph{when} to ask for help. A popular solution is to perform Bayesian inference on the world model, but this has two tricky requirements: (1) a prior distribution which contains the true world model (or an approximation), and (2) an environment where computing (or approximating) the posterior is tractable. A finite set of possible environments satisfies both conditions but is unrealistic in many real-world scenarios. In contrast, our algorithm can handle an uncountable policy class and a continuous unbounded input space, which is crucial for many real-world scenarios in which one never sees the exact same input twice.\looseness=-1

Bayesian inference combined with asking for help is studied by \citet{cohen_curiosity_2021, cohen_pessimism_2020, kosoy_delegative_2019, mindermann_active_2018}. We also mention \citet{hadfield-menell_inverse_2017, moldovan_safe_2012, turchetta_safe_2016}, who utilize Bayesian inference in the context of safe (online) reinforcement learning without asking for help (and without regret bounds).

We are only aware of two papers that theoretically address irreversibility without Bayesian inference: \citet{grinsztajn_there_2021} and \citet{maillard_active_2019}. The former proposes to sample trajectories and learn reversibility based on temporal consistency between states: intuitively, if $s_1$ always precedes $s_2$, we can infer that $s_1$ is unreachable from $s_2$. Although the paper theoretically grounds this intuition, there is no formal regret guarantee. The latter presents an algorithm which asks for help in the form of rollouts from the current state. However, the regret bound and number of rollouts are both linear in the worst case, due to the dependence on the $\gamma^*$ parameter which roughly captures how bad an irreversible action can be. In contrast, our algorithm achieves good regret even when actions are maximally bad.\looseness=-1

To our knowledge, we are the first to provide an algorithm which formally guarantees avoidance of catastrophe (with high probability) without Bayesian inference. We are also not aware of prior results comparable to our negative result, including in the Bayesian regime.

\textbf{Safe reinforcement learning (RL).} The safe RL problem is typically formulated as a constrained Markov Decision Process (CMDP) \citep{altman2021constrained}. In CMDPs, the agent must maximize reward while also satisfying safety constraints. See \citet{gu_review_2023, zhao_state-wise_2023,wachi_survey_2024}
 for surveys. The two most relevant safe RL papers are \citet{liu2021learning} and \citet{stradi2024learning}, both of which provide algorithms guaranteed to satisfy initially unknown safety constraints. Since neither paper allows external help, they require strong assumptions to make the problem tractable: the aforementioned results assume that the agent (1) knows a strictly safe policy upfront (i.e., a policy which satisfies the safety constraints with slack), (2) is periodically reset, and (3) observes the safety costs. In contrast, our agent has no prior knowledge, is never reset, and never observes payoffs. \looseness=-1

\textbf{Online learning.} See \citet{cesa2006prediction} and Chapter 21 of \citet{shalev-shwartz_understanding_2014} for introductions to online learning. A classical result states that sublinear regret is possible if and only if the hypothesis class has finite Littlestone dimension \citep{littlestone1988learning}. However, even some simple hypothesis classes have infinite Littlestone dimension, such as the class of thresholds on $[0,1]$ (Example 21.4 in \citealp{shalev-shwartz_understanding_2014}). Recently, \citet{haghtalab2024smoothed} showed that if the adversary only chooses a distribution over inputs rather than the precise input, only finite VC dimension \citep{vapnik_uniform_1971} is needed for sublinear regret. Specifically, they assume that each input is sampled from a distribution whose concentration is upper bounded by $\frac{1}{\sigma}$ times the uniform distribution. This framework is known as \emph{smoothed analysis}, originally due to \citet{spielman2004smoothed}.\looseness=-1

\textbf{Multiplicative objectives.} Although online learning traditionally studies the sum of payoffs, there is some work which aims to maximize the product of payoffs (or equivalently, the sum of logarithms). See, e.g., Chapter 9 of \citet{cesa2006prediction}. However, these regret bounds are still sublinear in $T$, in comparison to our subconstant regret bounds. Also, like most online learning work, those results assume that payoffs are observed on every time step. In contrast, our agent only receives feedback in response to queries (\Cref{fig:compare}) and never observes payoffs. \citet{barman_fairness_2023} studied a multiplicative objective in a multi-armed bandit context, but their objective is the geometric mean of payoffs instead of the product. Interpreted in our context, their regret bounds imply that the \emph{average} chance of catastrophe goes to zero, while we guarantee that the \emph{total} chance of catastrophe goes to zero. This distinction is closely related to the difference between subconstant and sublinear regret.\looseness=-1


\textbf{Active learning and imitation learning.} Our assumption that the agent only receives feedback in response to queries falls under the umbrella of active learning \citep{hanneke2014active}. This contrasts with passive learning, where the agent receives feedback automatically. The way our agent learns from the mentor is also reminiscent of imitation learning \citep{osa_algorithmic_2018}. Although ideas from these areas could be useful in our setting, we are not aware of any results from that literature which account for irreversible costs. 



\section{Model}\label{sec:model}

\textbf{Inputs.} Let $\bbn$ denote the strictly positive integers and let $T \in \bbn$ be the time horizon.  Let $\s = (x_1,x_2,\dots ,x_T) \in \X^T$ be the sequence of inputs. In the fully adversarial setting, each $x_t$ can have arbitrary (possibly randomized) dependence on the events of prior time steps. In the smoothed setting, the adversary only chooses the distribution $\D_t$ from which $x_t$ is sampled. Formally, a distribution $\D$ over $\X$ is $\sigma$-smooth if for any $S \subseteq \X$, $\D(S) \le \frac{1}{\sigma} U(S)$. (In the smoothed setting, we assume that $\X$ supports a uniform distribution $U$.\footnote{For example, it suffices for $\X$ to have finite Lebesgue measure. Note that this does not imply boundedness. Alternatively, $\sigma$-smoothness can be defined with respect to a different distribution; see Definition 1 of \citet{block2022smoothed}.}) If each $x_t$ is sampled from a $\sigma$-smooth $\D_t$, we say that $\s$ is $\sigma$-smooth. The sequence $\bfd = \D_1,\dots,\D_T$ can still be adaptive, i.e., the choice of $\D_t$ can depend on the events of prior time steps.

\textbf{Actions and payoffs.} Let $\Y$ be a finite set of actions. There also exists a special action $\hata$ which corresponds to querying the mentor. For $k \in \bbn$, let $[k] = \{1,2,\dots, k\}$. On each time step $t \in [T]$, the agent must select action $y_t \in \ \Y\cup \{\hata\}$, which generates a payoff. Let $\A = (y_1,\dots,y_T)$. We allow the payoff function to vary between time steps: let $\bfmu = (\mu_1,\dots,\mu_T) \in (\X \times \Y \to [0,1])^T$ be the sequence of payoff functions. Then $\mu_t(x_t, y_t) \in [0,1]$ is the agent's payoff at time $t$. Like $\bfd$, we allow $\bfmu$ to be adaptive. Unless otherwise noted, all expectations are over any randomization in the agent's decisions, any randomization in $\s$, and any randomization in the adaptive choice of $\bfmu$.


\textbf{Asking for help.} The mentor is endowed with a (possibly suboptimal) policy $\pi^m: \X \to \Y$. When action $\hata$ is chosen, the mentor informs the agent of the action $\pi^m(x_t)$ and the agent obtains payoff $\mu_t(x_t, \pi^m(x_t))$. For brevity, let $\mu_t^m(x) = \mu_t(x, \pi^m(x))$. The agent never observes payoffs: the only way to learn about $\bfmu$ is by querying the mentor.

We would like an algorithm which becomes ``self-sufficient" over time: the rate of querying the mentor should go to 0 as $T\to\infty$, or equivalently, the cumulative number of queries should be sublinear in $T$. Formally, let $Q_T(\bfmu, \pi^m) = \{t \in [T]: y_t = \hata\}$ be the random variable denoting the set of time steps with queries. Then we say that the (expected) number of queries is sublinear in $T$ if $\sup_{\bfmu, \pi^m} \E[|Q_T(\bfmu, \pi^m)|] \in o(T)$. In other words, there must exist $g: \bbn \to \bbn$ which does not depend on $\bfmu$ or $\pi^m$ such that $g(T) \in o(T)$ and $\sup_{\bfmu, \pi^m} \E[|Q_T(\bfmu, \pi^m)|] \le g(T)$. For brevity, we will usually write $Q_T = Q_T(\bfmu,\pi^m)$.

\textbf{Local generalization.} We assume that $\bfmu$ and $\pi^m$ satisfy \emph{local generalization}. Informally, if the agent is given an input $x$, taking the mentor action for a similar input $x'$ is almost as good as taking the mentor action for $x$. Formally, we assume $\X \subseteq \bbr^n$ and there exists $L > 0$ such that for all $x,x' \in \X$ and $t \in [T]$, $|\mu_t^m(x) - \mu_t(x, \pi^m(x'))| \le L \norm{x-x'}$, where $||\cdot||$ denotes Euclidean distance. This represents the ability to transfer knowledge between similar inputs:\looseness=-1
\[
\big|\!\!\underbrace{\mu_t(x, \pi^m(x))}_{\text{Taking the right action}} -\! \underbrace{\mu_t(x, \pi^m(x'))}_{\text{Using what you learned in $x'$}}\!\!\!\!\!\!\!\big| \, \le\: \underbrace{L\norm{x-x'}}_{\text{Input similarity}}
\]
This ability seems fundamental to intelligence and is well-understood in psychology (e.g., \citealp{esser_actioneffect_2023}) and education  (e.g., \citealp{hajian_transfer_2019}).  Note that the input space $\X\subseteq \bbr^n$ can be any encoding of the agent’s situation, not just its physical positioning. See \Cref{sec:lg} for further discussion.

All suprema over $\bfmu,\pi^m$ pairs are assumed to be restricted to $\bfmu, \pi^m$ pairs which satisfy local generalization.

\textbf{Regret.} If $\mu_t(x_t, y_t) \in [0,1]$ is the chance of avoiding catastrophe at time $t$ (conditioned on no prior catastrophe), then by the chain rule of probability, $\prod_{t=1}^T \mu_t(x_t, y_t)$ is the agent's overall chance of avoiding catastrophe. For given $\s, \A, \bfmu, \pi^m$, the agent's \emph{multiplicative regret}\footnote{One could also define the multiplicative regret as $R_T' = \prod_{t=1}^T \mu_t^m(x_t) - \prod_{t=1}^T \mu_t(x_t, y_t)$, but our definition is actually stricter: $\lim_{T\to\infty} \rmul \to 0$ implies $\lim_{T\to\infty} R_T'\to 0$, while the reverse is not true. In particular, $\lim_{T\to\infty} R_T' \to 0$ is trivial if $\lim_{T\to\infty} \prod_{t=1}^T \mu_t^m(x_t) \to 0$.} is
\[
\rmul(\s, \A, \bfmu,  \pi^m) = \log \prod_{t=1}^T \mu_t^m(x_t) - \log \prod_{t=1}^T \mu_t(x_t, y_t)
\]
when all payoffs are strictly positive. To handle the case where some payoffs are zero, we assume the existence of $\mum > 0$ such that $\mu_t^m(x_t) \ge \mum$ always. Thus only the agent's payoffs can be zero, so we can safely define $\rmul(\s, \A, \bfmu,  \pi^m)=\infty$ whenever $\mu_t(x_t,y_t) = 0$ for some $t \in [T]$. We write $\rmul = \rmul(\s,\A, \bfmu, \pi^m)$ for brevity.

The assumption of $\mu_t^m(x_t) \ge \mum > 0$ means that the mentor cannot be abysmal. In fact, we argue that high-stakes AI applications should employ a mentor who is almost always safe, i.e., $\mum \approx 1$. If no such mentor exists for some application, perhaps that application should be avoided altogether. \looseness=-1

We also define the agent's \emph{additive regret} as
\[
\rplus(\s, \A, \bfmu,  \pi^m) = \sum_{t=1}^T \mu_t^m(x_t) - \sum_{t=1}^T \mu_t(x_t, y_t)
\]
and similarly write $\rplus = \rplus(\s, \A, \bfmu,  \pi^m)$ for brevity. For both objectives, we desire subconstant worst-case regret: the total (not average) expected regret should go to 0 for any $\bfmu$ and $\pi^m$. Formally, we want $\lim_{T\to\infty} \sup_{\bfmu, \pi^m} \E[\rmul] = 0$ and $\lim_{T\to\infty} \sup_{\bfmu, \pi^m} \E[\rplus] = 0$.

\textbf{VC and Littlestone dimensions.} VC dimension \citep{vapnik_uniform_1971} and Littlestone dimension \citep{littlestone1988learning} are standard measures of learning difficulty which capture the ability of a hypothesis class (in our case, a policy class) to realize arbitrary combinations of labels (in our case, actions). We omit the precise definitions since we only utilize these concepts via existing results. See \citet{shalev-shwartz_understanding_2014} for a comprehensive overview.\looseness=-1

\textbf{Misc.} The diameter of a set $S \subseteq \X$ is defined by $\diam(S) = \max_{x, x' \in S} \norm{x-x'}$. All logarithms and exponents are base $e$ unless otherwise noted. For convenience, we treat $\min_{x \in \emptyset} f(x)$ as $\infty$ for any function $f$.

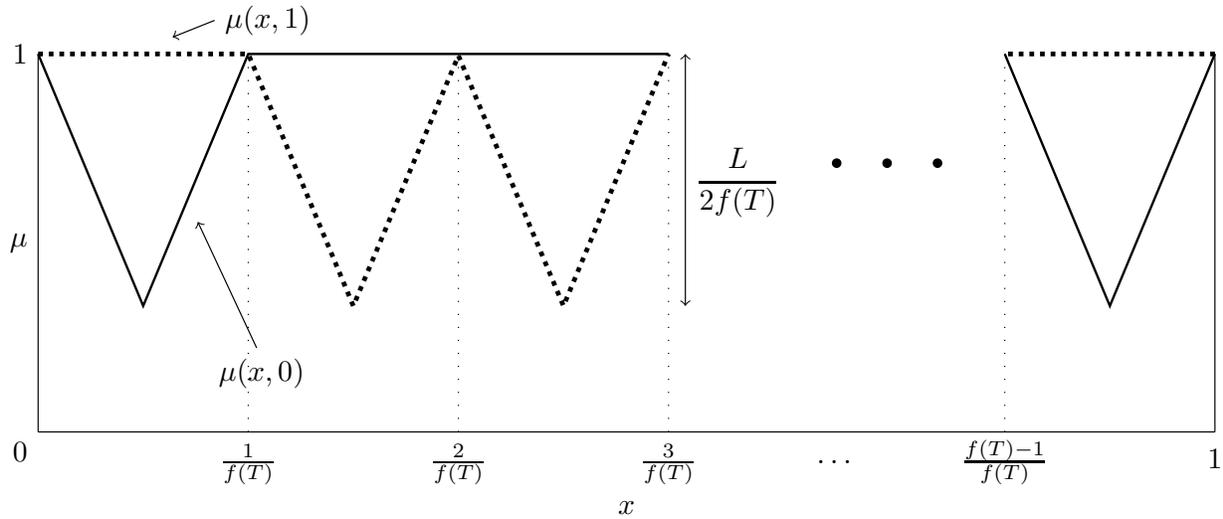
\begin{figure*}[ht]
\centering
\resizebox{0.78\linewidth}{!}{
\begin{tikzpicture}[>={Stealth[scale=1.3]}]
\newcommand\yoffset{2}
\draw (0,\yoffset) -- (14,\yoffset) node[below, midway, label={[label distance=-13mm]{input \large$x$}}] {};
\draw (0,\yoffset) -- (0,6) node[midway, left] {payoff \large$\mu$\ \  };
\draw (14,\yoffset) -- (14,6) node[left] {};

\draw[<-] (1.9,4) -- (2.6,2.8) node[below] {\ $\mu(x,0)$};
\draw[<-] (1.6,6.2) -- (2.1,6.4) node[right] {$\mu(x,1)$};
\draw[<->] (7.7,6) -- (7.7,3) node[midway, right] {$\cfrac{L}{2f(T)}$};

\foreach \s in {2.5,5,7.5,11.5}
\draw[loosely dotted] (\s,\yoffset) -- (\s, 6);

\draw[ultra thick, loosely dotted] (0,6) -- (2.5,6) -- (3.75,3) -- (5,5.98) -- (6.25,3) -- (7.5,6);
\draw[thick] (0,6) -- (1.25,3) -- (2.5,6)-- (7.5,6);
\draw[thick] (14,6) -- (12.75, 3) -- (11.5,6);
\draw[ultra thick, loosely dotted] (14,6) -- (11.5,6);

\draw[fill] (9.5,4.7) circle [radius=0.05];
\draw[fill] (10.1,4.7) circle [radius=0.05];
\draw[fill] (10.7,4.7) circle [radius=0.05];

\node[below,label={[label distance=-8mm]{\large$\frac{1}{f(T)}$}}] at (2.5,\yoffset) {};
\node[below,label={[label distance=-8mm]{\large$\frac{2}{f(T)}$}}] at (5,\yoffset) {};
\node[below,label={[label distance=-8mm]{\large$\frac{3}{f(T)}$}}] at (7.5,\yoffset) {};
\node[below,label={[label distance=-8mm]{\large$\frac{f(T)-1}{f(T)}$}}] at (11.5,\yoffset) {};
\node[below,label={[label distance=-5mm]{\large$\dots$}}] at (9.5,\yoffset) {};
\node[below,label={[label distance=-5.6mm]{1}}] at (14,\yoffset) {};

\node[below left] at (0,\yoffset) {$0$};
\node[left] at (0,6) {$1$};
\end{tikzpicture}
}
\caption{An illustration of the construction we use to prove Theorem~\ref{thm:neg} (not to scale). The horizontal axis indicates the input $x \in [0,1] = \X$ and the vertical axis indicates the payoff $\mu(x,y) \in [0,1]$. The solid line represents $\mu(x,0)$ and the dotted line represents $\mu(x,1)$. In each section, one of the actions has the optimal payoff of 1, and the other action has the worst possible payoff allowed by $L$, reaching a minimum of $1 - \frac{L}{2f(T)}$. Crucially, both actions result in a payoff of 1 at the boundaries between sections: this allows us to ``reset" for the next section. As a result, we can freely toggle the optimal action for each section independently.}
\label{fig:basic}
\end{figure*}

\section{Avoiding catastrophe with sublinear queries is impossible in general}\label{sec:negative}

We first show that in general, any algorithm with sublinear mentor queries has unbounded regret in the worst-case, even when inputs are i.i.d. on $[0,1]$ and $\bfmu$ does not vary over time. The formal proofs are deferred to Appendix~\ref{sec:neg-proof}, but we provide intuition and define the construction here.


\begin{restatable}{theorem}{thmNeg}
\label{thm:neg}
Any algorithm with sublinear queries has unbounded worst-case regret (both multiplicative and additive) as $T\to \infty$. Specifically,
\[
\sup_{\bfmu, \pi^m}\E[\rmul],\, 
\sup_{\bfmu, \pi^m}\E[\rplus] \in \Omega\left(\! L \sqrt{\frac{T}{\sup_{\bfmu,\pi^m}\E[|Q_T|]\! +\! 1}}\right)
\]
\end{restatable}
Intuitively, the regret decreases as the number of queries increases. However, as long as the number of queries remains sublinear in $T$, the regret is unbounded as $T\to\infty$.

We also have the following corollary of \Cref{thm:neg}:

\begin{restatable}{corollary}{corNegProb}
\label{cor:neg-prob}
Even when $\mu_t^m(x) = 1$ for all $t$ and $x$, any algorithm with sublinear queries satisfies
\[
\lim_{T\to\infty}\, \sup_{\bfmu,\pi^m}\, \E\left[\prod_{t=1}^T\mu_t(x_t,y_t)\right] = 0
\]
\end{restatable}
In other words, even if the mentor never causes catastrophe, any algorithm with sublinear queries causes catastrophe with probability 1 as $T\to\infty$ in the worst case.

\subsection{Intuition}\label{sec:neg-intuition}

We partition $\X$ into equally-sized sections that are ``independent" in the sense that querying an input in section $i$ provides no information about section $j$. There will be $f(T)$ sections, where $f$ is a function that we will choose. If $|Q_T| \in o(f(T))$, most of these sections will never contain a query. When the agent sees an input in a section not containing a query, it essentially must guess, meaning it will be wrong about half the time. We then choose a payoff function (which is the same for all time steps) which makes the wrong guesses as costly as possible, subject to the local generalization constraint. Figure~\ref{fig:basic} fleshes out this idea.


The choice of $f$ is crucial. One idea is $f(T) = T$. If the agent is wrong about half the time, and the average payoff for wrong actions is $1-\frac{L}{4T}$, we can estimate the regret as\looseness=-1
\begin{align*}
\rmul\, =&\ \log \prod_{t=1}^T \mu_t^m(x_t) - \log \prod_{t=1}^T \mu_t(x_t, y_t)\\
\approx&\ \log 1 - \log \left(1-\frac{L}{4T}\right)^{T/2}\\
=&\ - \frac{T}{2} \log \left(1-\frac{L}{4T}\right)\\
\approx&\ \frac{T}{2} \cdot \frac{L}{4T}\\
=&\ \frac{L}{8}
\end{align*}
Thus $f(T) = T$ can at best give us a constant lower bound on regret. Instead, we choose $f$ such that $|Q_T| \in o(f(T))$ and $f(T) \in o(T)$. Specifically, we choose $f(T) = \max(\sqrt{\sup_{\bfmu,\pi^m}\E[|Q_T|] T},1)$. Most sections still will not contain a query, so the agent is still wrong about half the time, but the payoff for wrong actions is worse. Then
\begin{align*}
\rmul\, \approx&\ \log 1 - \log \left(1-\frac{L}{4f(T)}\right)^{T/2}\\
\approx&\  \frac{LT}{8f(T)}\\
\in&\  \Omega\left(L \sqrt{\frac{T}{\sup_{\bfmu,\pi^m}\E[|Q_T|] + 1}}\right)
\end{align*}
which produces the bound in \Cref{thm:neg}.

\textbf{VC dimension.} The class of mentor policies in our construction has VC dimension $f(T)$; across all possible values of $T$, this implies infinite VC (and Littlestone) dimension. We know that this is necessary given our positive results.

\subsection{Formal definition of construction}\label{sec:neg-construction}

Let $\X = [0,1]$ and $\D_t = U(\X)$ for each $t \in [T]$, where $U(\X)$ is the uniform distribution on $\X$. Assume that $L \le 1$; this will simplify the math and only makes the problem easier for the agent. We define a family of payoff functions parameterized by a function $f: \bbn \to \bbn$ and a bit string $\x  = (a_1, a_2,\dots, a_{f(T)}) \in \{0,1\}^{f(T)}$. The bit $a_j$ will denote the optimal action in section $j$. Note that $f(T) \ge 1$.

For each $j\in [f(T)]$, we refer to $X_j = \left[\frac{j-1}{f(T)}, \frac{j}{f(T)}\right]$ as the $j$th section. Let $m_j = \frac{j-0.5}{f(T)}$ be the midpoint of $X_j$. Assume that each $x_t$ belongs to exactly one $X_j$ (this happens with probability 1, so this assumption does not affect the expected regret). Let $j(x)$ denote the index of the section containing input $x$.  Then $\mufx$ is defined by

\resizebox{\linewidth}{!}{%
$\displaystyle
\mufx(x,y) = 
\begin{cases}
1 & \textnormal{ if } y = a_{j(x)}\\
1 - L \left(\cfrac{1}{2f(T)} - |m_{j(x)} - x|\right)  & \textnormal{ if } y \ne a_{j(x)}\\
\end{cases}
$
}


We use this payoff function for all time steps: $\mu_t = \mufx$ for all $t \in [T]$. Let $\pi^m$ be any optimal policy for $\mufx$. Note 
that there is a unique optimal action for each $x_t$, since each $x_t$ belongs to exactly one $X_j$; formally, $\pi^m(x_t) = a_{j(x_t)}$. 

For any $\x \in \{0,1\}^{f(T)}$, $\mufx$ is piecewise linear (trivially) and continuous (because both actions have payoff 1 on the boundary between sections). Since the slope of each piece is in $\{-L, 0, L\}$, $\mufx$ is Lipschitz continuous. Thus by \Cref{prop:lipschitz}, $\pi^m$ satisfies local generalization.


\section{Avoiding catastrophe given finite VC or Littlestone dimension}\label{sec:pos-nd}

\Cref{thm:neg} shows that avoiding catastrophe is impossible in general. What if we restrict ourselves to settings where standard online learning is possible? Specifically, we assume that $\pi^m$ belongs to a policy class $\Pi$ where either (1) $\Pi$ has finite VC dimension $d$ and $\s$ is $\sigma$-smooth or (2) $\Pi$ has finite Littlestone dimension $d$.\footnote{Recall from \Cref{sec:online} that standard online learning becomes tractable under either of these assumptions.} This section presents a simple algorithm which guarantees subconstant regret (both multiplicative and additive) and sublinear queries under either of those assumptions. Formal proofs are deferred to \Cref{sec:nd-proof} but we provide intuition and a proof sketch here.\looseness=-1

\subsection{Intuition behind the algorithm}\label{sec:nd-intuition}

\Cref{alg:nd} has two simple components: (1) run a modified version of the Hedge algorithm for online learning, but (2) ask for help for unfamiliar inputs (specifically, when the input is very different from any queried input with the same action under the proposed policy). Hedge ensures that the number of mistakes (i.e., the number of time steps where the agent's action doesn't match the mentor's) is small, and asking for help for unfamiliar inputs ensures that when we do make a mistake, the cost isn't too high. This algorithmic structure seems quite natural: mostly follow a baseline strategy, but ask for help when out-of-distribution.\looseness=-1


\textbf{Hedge.} Hedge \citep{freund1997decision} is a standard online learning algorithm which ensures sublinear regret when the number of hypotheses (in our case, the number of policies in $\Pi$) is finite.\footnote{Chapter 5 of \citet{slivkins2019introduction} and Chapter 21 of \citet{shalev-shwartz_understanding_2014} give modern introductions to Hedge.} We would prefer not to assume that $\Pi$ is finite. Luckily, any policy in $\Pi$ can be approximated within $\ep$ when either (1) $\Pi$ has finite VC dimension and $\s$ is $\sigma$-smooth or (2) $\Pi$ has finite Littlestone dimension. Thus we can run Hedge on this approximative policy class instead.

One other modification is necessary. In standard online learning, losses are observed on every time step, but our agent only receives feedback in response to queries. To handle this, we modify Hedge to only perform updates on time steps with queries and to issue a query with probability $p$ on each time step. Continuing our lucky streak, \citet{russo2024online} analyze exactly this modification of Hedge.


\subsection{Local generalization}\label{sec:lg}

Local generalization is vital: this is what allows us to detect when an input is unfamiliar. Crucially, our algorithm does not need to know how inputs are encoded in $\bbr^n$ and does not need to know $L$: it only needs to be able to compute the nearest-neighbor distance $\min_{(x,y) \in S: y = \pi_t(x_t)} \norm{x_t - x}$. Thus we only need to assume that there exists \emph{some} encoding satisfying local generalization. \looseness=-1

To elaborate, recall the example that a 3 mm spot and a 3.1 mm spot on X-rays likely have similar risk levels (assuming similar density, location, etc.). If the risk level abruptly increases for any spot over 3 mm, then local generalization may not hold for a naive encoding which treats size as a single dimension. However, a more nuanced encoding would recognize that these two situations -- a 3 mm vs 3.1 mm spot -- are in fact \emph{not} similar. Constructing a suitable encoding may be challenging, but we do \emph{not} require the agent to have explicit access to such an encoding: the agent only needs a nearest-neighbor distance oracle.

\begin{algorithm}[t]
\centering
\begin{algorithmic}

\STATE Inputs: $T \in \bbn,\: \ep \in \bbrspos,\: d \in \bbn,\,$ policy class $\Pi$
\IF{$\Pi$ has VC dimension $d$}
    \STATE $\tilpi \gets$ any smooth $\ep$-cover of $\Pi$ of size at most $(41/\ep)^d$ \ (see \Cref{def:smooth-cover})
\ELSE
    \STATE $\tilpi \gets$ any adversarial cover of $\Pi$ of size at most $(eT/d)^d$ \ (see \Cref{def:adv-cover})
\ENDIF
\STATE $S \gets \emptyset$
\STATE $w(\pi) \gets 1$ for all $\pi \in \tilpi$
\STATE $p \gets 1/\sqrt{\ep T}$
\STATE $\eta \gets \max\big(\sqrt{\frac{p\log |\tilpi|}{2T}},\: \frac{p^2}{\sqrt{2}} \big)$
\FOR{$t$ \textbf{from} $1$ \textbf{to} $T$} \STATE \textit{Run one step of Hedge, which selects policy $\pi_t$}
        \STATE with probability $p:$ $\texttt{hedgeQuery} \gets \texttt{true}$
        \STATE with probability $1-p:$ $\texttt{hedgeQuery} \gets \texttt{false}$
        \IF{\texttt{hedgeQuery}\footnotemark}
                \STATE Query mentor and observe $\pi^m(x_t)$
                \STATE $\ell(t,\pi) \gets \bfone(\pi(x_t) \ne \pi^m(x_t))$ for all $\pi \in \tilpi$
                \STATE $\ell^* \gets \min_{\pi \in \tilpi} \ell(t,\pi)$
                \STATE $w(\pi) \gets w(\pi)\cdot \exp(-\eta(\ell(t,\pi) - \ell^*))$ for all $\pi \in \tilpi$
                \STATE $\pi_t \gets \argmin_{\pi \in \tilpi} \ell(t, \pi)$
        \ELSE 
            \STATE $P(\pi) \gets w(\pi) / \sum_{\pi' \in \tilpi} w(\pi')$ for all $ \pi \in \tilpi$
            \STATE Sample $\pi_t \sim P$ 
        \ENDIF
            \IF {$\min_{(x,y) \in S: y = \pi_t(x_t)}  \norm{x_t - x} > \ep^{1/n}$\looseness=-1} \STATE \textit{Ask for help if out-of-distribution}
                        \STATE Query mentor (if not already queried this round) and observe $\pi^m(x_t)$
                        \STATE $S \gets S \cup\{(x_t, \pi^m(x_t))\}$   
                \ELSE \STATE \textit{Otherwise, follow Hedge's chosen policy}
                                        \STATE Take action $\pi_t(x_t)$
                \ENDIF
\ENDFOR
\end{algorithmic}
\caption{successfully avoids catastrophe assuming finite VC or Littlestone dimension.}
\label{alg:nd}
\end{algorithm}

Conceptually, the algorithm only needs to be able to detect when an input is unfamiliar. While this task remains far from trivial, we argue that it is more tractable than fully constructing a suitable encoding. See \Cref{sec:conclusion} for a discussion of potential future work on this topic.

We note that these encoding-related questions apply similarly to the more standard assumption of Lipschitz continuity. In fact, Lipschitz continuity implies local generalization when the mentor is optimal (\Cref{prop:lipschitz}). We also mention that without local generalization, avoiding catastrophe is impossible even when the mentor policy class has finite VC dimension and $\s$ is $\sigma$-smooth (\Cref{thm:no-lg}).

\subsection{Main result}

For simplicity, here we only state our results for $\Y =\{0,1\}$; \Cref{sec:multi} extends our result to many actions using the standard ``one versus rest'' reduction. We first prove regret and query bounds parametrized by $\ep$:

\footnotetext{The reader may notice that we do not update $S$ in this case. This is simply because those updates are not necessary for the desired bounds and omitting these updates simplifies the analysis.}

\begin{restatable}{theorem}{thmNDep}\label{thm:pos-nd-ep}
Let $\Y = \{0,1\}$. Assume $\pi^m \in \Pi$ where either (1) $\Pi$ has finite VC dimension $d$ and $\s$ is $\sigma$-smooth, or (2) $\Pi$ has finite Littlestone dimension $d$. Then for any $T \in \bbn$ and $\ep \in \left[\mfrac{1}{T}, \left(\mfrac{\mum}{2L}\right)^n\right]$,  \Cref{alg:nd} satisfies\looseness=-1
\begin{align*}
\E \left[\rmul\right] \in&\ O\left(\frac{d L}{\sigma \mum} T \ep^{1+1/n} \log (T + 1/\ep) \right)\\
\E \left[\rplus\right] \in&\ O\left(\frac{d L}{\sigma} T \ep^{1+1/n} \log (T + 1/\ep) \right)\\
\E[|Q_T|] \in&\ O\left(\! \sqrt{\frac{T}{\ep}} +\frac{d}{\sigma} T \ep \log (T\! +\! 1/\ep)  + \frac{\E[\diam(\s)^n]}{\ep}\right)
\end{align*}
\end{restatable}

In Case 1, the expectation is over the randomness of both $\s$ and the algorithm, while in Case 2, the expectation is over only the randomness of the algorithm. The bounds clearly have no dependence on $\sigma$ in Case 2, but we include $\sigma$ anyway to avoid writing two separate sets of bounds.

 
To obtain subconstant regret and sublinear queries, we can choose $\ep = T^\frac{-2n}{2n+1}$. This also satisfies $2 L\ep^{1/n} \le \mum$ for large enough $T$.

\begin{restatable}{theorem}{thmND}
\label{thm:pos-nd}
Let $\Y = \{0,1\}$. Assume $\pi^m \in \Pi$ where either (1) $\Pi$ has finite VC dimension $d$ and $\s$ is $\sigma$-smooth or (2) $\Pi$ has finite Littlestone dimension $d$. Then for any $T \in \bbn$, \Cref{alg:nd} with $\ep = T^\frac{-2n}{2n+1}$ satisfies
\begin{align*}
\E \left[\rmul\right] \in&\ O\left(\frac{d L}{\sigma \mum} T^{\frac{-1}{2n+1}} \log T\right)\\
\E \left[\rplus\right] \in&\ O\left(\frac{d L}{\sigma} T^{\frac{-1}{2n+1}} \log T\right)\\
\E[|Q_T|] \in&\ O\left(T^\frac{4n+1}{4n+2}\left(\frac{d}{\sigma} \log T + \E[\diam(\s)^n]\right)\right) 
\end{align*}
\end{restatable}


Before proceeding to the proof sketch, we highlight some advantages of our algorithm.

\textbf{Limited knowledge required.} Our algorithm needs to know $\Pi$, as is standard. However, the algorithm does not need to know $\sigma$ (in the smooth case) or $L$. Although \Cref{alg:nd} as written does require $T$ as an input, it can be converted into an infinite horizon/anytime algorithm via the standard ``doubling trick'' (see, e.g., \citealp{slivkins2019introduction}).

\textbf{Unbounded environment.} Our algorithm can handle an unbounded input space: the number of queries simply scales with the maximum distance between observed inputs in the form of $\E[\diam(\s)^n]$.

\textbf{Simultaneous bounds for all $\bfmu$.} Recall that the agent never observes payoffs and only learns from mentor queries. This means that the agent's behavior does not depend on $\bfmu$ at all. In other words, the distribution of $(\s,\A)$ depends on $\pi^m$ but not $\bfmu$. Consequently, a given $\pi^m$ induces a \emph{single} distribution $(\s,\A)$ which satisfies the bounds in \Cref{thm:pos-nd} \emph{simultaneously} for all $\bfmu$ satisfying local generalization.

\subsection{Proof sketch}

The formal proof of \Cref{thm:pos-nd-ep} can be found in Appendix~\ref{sec:nd-proof}, but we outline the key elements here. The regret analysis consists of two ingredients: analyzing the Hedge component and analyzing the ``ask for help when out-of-distribution'' component. The former will bound the number of mistakes made by the algorithm (i.e., the number of time steps where the agent's action doesn't match the mentor's), and the latter will bound the cost of any single mistake. We must also show that the latter does not result in excessively many queries, which we do via a novel packing argument.


We begin by formalizing two notions of approximating a policy class:

\begin{definition}
\label{def:smooth-cover}
Let $U$ be the uniform distribution over $\X$. For $\ep > 0$, a policy class $\tilpi$ is a \emph{smooth $\ep$-cover} of a policy class $\Pi$ if for every $\pi \in \Pi$, there exists $\stilpi \in \tilpi$ such that $\Pr_{x\sim U} [\pi(x) \ne \stilpi(x)] \le \ep$.
\end{definition}

\begin{definition}
\label{def:adv-cover}
A policy class $\tilpi$ is an \emph{adversarial cover} of a policy class $\Pi$ if for every $\s \in \X^T$ and $\pi \in \Pi$, there exists $\stilpi \in \tilpi$ such that $\pi(x_t) = \stilpi(x_t)$ for all $t \in [T]$.
\end{definition}

An adversarial cover is a perfect cover by definition. The idea of a smooth $\ep$-cover is that if the probability of disagreement over the uniform distribution is small, then the probability of disagreement over a $\sigma$-smooth distribution cannot be too much larger.
\begin{lemma}
\label{lem:smooth-concentrate}
Let $\tilpi$ be a smooth $\ep$-cover of $\Pi$ and let $\D$ be a $\sigma$-smooth distribution. Then for any $\pi \in \Pi$, there exists $\stilpi \in \tilpi$ such that $\Pr_{x \sim \D}[\pi(x) \ne \stilpi(x)] \le \ep/\sigma$.
\end{lemma}

\begin{proof}
Define $S(\stilpi) = \{x \in \X: \pi(x) \ne \stilpi(x)\}$. By the definition of a smooth $\ep$-cover, there exists $\stilpi \in \tilpi$ such that $\Pr_{x\sim U}[x \in S(\stilpi)] \le \ep$. Since $\D$ is $\sigma$-smooth, $\Pr_{x \sim \D}[\pi(x) \ne \stilpi(x)] =\Pr_{x\sim \D}[x \in S(\stilpi)] \le \\\Pr_{x\sim U}[x \in S(\stilpi)]/\sigma \le \ep/\sigma$, as claimed.
\end{proof}

The existence of small covers is crucial:


\begin{lemma}[Lemma 7.3.2 in \citet{haghtalab2018foundation}\footnote{See also \citet{haussler_generalization_1995} or Lemma 13.6 in \citet{boucheron2013concentration} for variants of this lemma.}]
\label{lem:smooth-cover}
For all $\ep > 0$, any policy class of VC dimension $d$ admits a smooth $\ep$-cover of size at most $(41 /\ep)^d$.
\end{lemma}

\begin{lemma}[Lemmas 21.13 and A.5 in \citet{shalev-shwartz_understanding_2014}]
\label{lem:adv-cover}
Any policy class of Littlestone dimension $d$ admits an adversarial cover of size at most $(eT/d)^d$.
\end{lemma}




We will run a variant of Hedge on $\tilpi$. The vanilla Hedge algorithm operates in the standard online learning model where on each time step, the agent selects a policy (or more generally, a hypothesis), and observes the \emph{loss} of every policy. In general the loss function can depend arbitrarily on the time step, the policy, and prior events, but we will only use the indicator loss function $\ell(t, \pi) = \bfone(\pi(x_t) \ne \pi^m(x_t))$. Crucially, whenever we query and learn $\pi^m(x_t)$, we can compute $\ell(t,\pi)$ for every $\pi \in \tilpi$.

We cannot afford to query on every time step, however. Recently, \citet{russo2024online} analyzed a variant of Hedge where losses are observed only in response to queries, which they call ``label-efficient feedback". They proved a regret bound when a query is issued on each time step with fixed probability $p$. \Cref{lem:russo} restates their result in a form that is more convenient for us. See Appendices~\ref{sec:russo} and \ref{sec:adaptive} for details on our usage of results from \citet{russo2024online}. Full pseudocode for \textsc{HedgeWithQueries} can also be found in the appendix (\Cref{alg:hedge}).

\begin{restatable}[Lemma 3.5 in \citealp{russo2024online}]{lemma}{lemRusso}\label{lem:russo}
Assume $\tilpi$ is finite. Then for any loss function $\ell: [T] \times \tilpi \to [0,1]$ and query probability $p >0$, \textsc{HedgeWithQueries} enjoys the regret bound 
\[
\sum_{t=1}^T \E[\ell(t, \pi_t)] - \min_{\stilpi \in \tilpi} \sum_{t=1}^T \E[\ell(t, \pi)] \le \frac{\log |\tilpi| }{p^2}
\]
where $\pi_t$ is the policy chosen at time $t$ and the expectation is over the randomness of the algorithm.
\end{restatable}

We apply \Cref{lem:russo} with $\ell(t, \pi) = \bfone(\pi(x_t) \ne \pi^m(x_t))$ and combine this with Lemmas \ref{lem:smooth-cover} and \ref{lem:smooth-concentrate} (in the $\sigma$-smooth case) and with \Cref{lem:adv-cover} (in the adversarial case). This  yields a $O\left(\frac{d}{\sigma} T \ep \log(1/\ep) \log T\right)$ bound on the number of mistakes made by \Cref{alg:nd} (\Cref{lem:nd-num-errors}).


The other key ingredient of the proof is analyzing the ``ask for help when out-of-distribution'' component. Combined with the local generalization assumption, this allows us to fairly easily bound the cost of a single mistake (\Cref{lem:nd-lipschitz-payoff}). The trickier part is bounding the number of resulting queries. It is tempting to claim that the inputs queried in the out-of-distribution case must all be separated by at least $\ep^{1/n}$ and thus form an $\ep^{1/n}$-packing, but this is actually false.

Instead, we bound the number of data points (i.e., queries) needed to cover a set \emph{with respect to the realized actions of the algorithm}  (\Cref{lem:pos-nd-queries}). This contrasts with vanilla packing arguments which consider all data points in aggregate. The key to our analysis is that the number of mistakes made by the algorithm -- which we already bounded in \Cref{lem:nd-num-errors} -- gives us crucial information about how data points are distributed with respect to the actions of the algorithm. Our technique might be useful in other contexts where a more refined packing argument is needed and a bound on the number of mistakes already exists.






\section{Conclusion and future work}\label{sec:conclusion}

In this paper, we proposed a model of avoiding catastrophe in online learning. We showed that achieving subconstant regret in our problem (with the help of a mentor and local generalization) is no harder than achieving sublinear regret in standard online learning.

\textbf{Remaining technical questions.} First, we have not resolved whether our problem is tractable for finite VC dimension and fully adversarial inputs (although \Cref{sec:positive-1d} shows that the problem is tractable for at least some classes with finite VC but infinite Littlestone dimension). Second, the time complexity of \Cref{alg:nd} currently stands at a hefty $\Omega(|\tilpi|)$ per time step plus the time to compute $\tilpi$. In the standard online learning setting, \citet{block2022smoothed} and \citet{haghtalab2022oracle} show how to replace discretization approaches like ours with \emph{oracle-efficient} approaches, where a small number of calls to an optimization oracle are made per round.  We are optimistic about leveraging such techniques to obtain efficient algorithms in our setting. 


\textbf{Local generalization.} Our algorithm crucially relies on the ability to detect when an input is unfamiliar, i.e., differs significantly from prior observations in a metric space which satisfies local generalization. Without this ability, the practicality of our algorithm would be fundamentally limited. One option is to use out-of-distribution (OOD) detection, which is conceptually similar and well-studied (see \citealp{yang2024generalized} for a survey). However, it is an open question whether standard OOD detection methods are measuring distance in a metric space which satisfies local generalization.

We are also interested in alternatives to local generalization. \Cref{thm:no-lg} shows that our positive result breaks down if local generalization is removed, so some sort of assumption is necessary. One possible alternative is Bayesian inference. We intentionally avoided Bayesian approaches in this paper due to tractability concerns, but it seems premature to abandon those ideas entirely.


\textbf{MDPs.} Finally, we are excited to apply the ideas in this paper to Markov Decision Processes (MDPs):
specifically, MDPs where some actions are irreversible (``non-communicating'') and the agent only gets one attempt (``single-episode''). In such MDPs, the agent must not only avoid catastrophe but also obtain high reward. As discussed in \Cref{sec:related}, very little theory exists for RL in non-communicating single-episode MDPs. Can an agent learn near-optimal behavior in high-stakes environments while becoming self-sufficient over time? Formally, we pose the following open problem:

\begin{center}
\textit{Is there an algorithm for non-communicating single-episode undiscounted MDPs which ensures that both the regret and the number of mentor queries are sublinear in $T$?}
\end{center}


\section*{Impact statement}

As AI systems become increasingly powerful, we believe that the safety guarantees of such systems should become commensurately robust. Irreversible costs are especially worrisome, and we hope that our work plays a small part in mitigating such risks. We do not believe that our work has any concrete potential risks that should be highlighted here.

\newpage

\section*{Author contributions}

B. Plaut conceived the project. B. Plaut designed the mathematical model with feedback from H. Zhu and S. Russell. B. Plaut proved all of the results. H. Zhu participated in proof brainstorming and designed counterexamples for several early conjectures. B. Plaut wrote the paper, with feedback from H. Zhu and S. Russell. S. Russell supervised the project and secured funding.

\section*{Acknowledgements}

This work was supported in part by a gift from Open Philanthropy to the Center for Human-Compatible AI (CHAI) at UC Berkeley. This paper also benefited from discussions with many other researchers. We would like to especially thank (in alphabetical order) Aly Lidayan, Bhaskar Mishra, Cameron Allen, Juan Li\'{e}vano-Karim, Matteo Russo, Michael Cohen, Nika Haghtalab, and Scott Emmons. We would also like to thank our anonymous reviewers for helpful feedback.

\bibliography{refs}
\bibliographystyle{icml2025}

\appendix

\onecolumn

\section{Proof of Theorem~\ref{thm:neg}}\label{sec:neg-proof}

\subsection{Proof notation}
\begin{enumerate}[leftmargin=1.5em,
    topsep=0.5ex,
    partopsep=0pt,
    parsep=0pt,
    itemsep=0.5ex]
    \item Let $M_j$ be the set of time steps $t \le T$ where $|m_j - x_t| \le \mfrac{1}{4f(T)}$. In words, $x_t$ is relatively close to the midpoint of $X_j$. This will imply that the suboptimal action is in fact quite suboptimal. This also implies that $x_t$ is in $X_j$, since each $X_j$ has length $1/f(T)$.
    \item Let $J_{\neg Q} = \{j \in [f(T)]: x_t \not\in X_j\ \forall t \in Q_T\}$ be the set of sections that are never queried. Since each query appears in exactly one section (because each input appears in exactly one section), $|J_{\neg Q}| \ge f(T) - |Q_T|$.
    \item For each $j \in J_{\neg Q}$, let $y^j$ be the most frequent action among time steps in $M_j$: $y^j = \argmax_{y \in \{0,1\}} | \{t \in M_j: y = y_t\} |$.
    \item Let $J_{\neg Q}' = \{j \in J_{\neg Q}: a_j \ne y^j\}$ be the set of sections where the more frequent action is wrong according to $\mufx$.
    \item Let $M_j' = \{t \in M_j: y_t \ne a_j\}$ be the set of time steps where the agent chooses the wrong action according to $\mufx$, and $x_t$ is close to the midpoint of section $j$.
\end{enumerate}

Since $\s, \A$, and $\x$ are random variables, all variables defined on top of them (such as $M_j$) are also random variables. In contrast, the partition $\X = \{X_1,\dots,X_{f(T)}\}$ and properties thereof (like the midpoints $m_j$) are not random variables.

\subsection{Proof roadmap}

The proof considers an arbitrary algorithm with sublinear queries, and proceeds via the following steps: 
\begin{enumerate}[leftmargin=1.5em,
    topsep=0.5ex,
    partopsep=0pt,
    parsep=0pt,
    itemsep=0.5ex]
\item  Show that multiplicative regret and additive regret are tightly related (\Cref{lem:prod-vs-add}). We will also use this lemma for our positive results.
\item Prove an asymptotic density lemma which we will use to show that $f(T) = \sqrt{|Q_T|T}$ is asymptotically between $|Q_T|$ and $T$ (Lemma~\ref{lem:density}).
\item  Prove a simple variant of the Chernoff bound which we will apply multiple times (Lemma~\ref{lem:chernoff}).
\item Show that with high probability, $\sum_{j \in S} |M_j|$ is large for any subset of sections $S$ (Lemma~\ref{lem:concentrate}).
\item Prove that $|J_{\neg Q}'|$ is large with high probability (\Cref{lem:neg-j-large}).
\item The key lemma is Lemma~\ref{lem:neg-error}, which shows that a randomly sampled $\x$ produces poor agent performance with high probability. The central idea is that at least $f(T) - |Q_T|$ sections are never queried (which is large, by Lemma~\ref{lem:density}), so the agent has no way of knowing the optimal action in those sections. As a result, the agent picks the wrong answer at least half the time on average (and at least a quarter of the time with high probability). Lemma~\ref{lem:concentrate} implies that a constant fraction of those time steps will have significantly suboptimal payoffs, again with high probability.
\item  Apply $\sup\limits_{\bfmu,\pi^m}\ \E\limits_{\s,\A}\ \rmul(\s, \A, \bfmu, \pi^m) \ge \E\limits_{\pi^m, \x \sim U(\{0,1\}^{f(T)})}\ \E\limits_{\s,\A}\ \rmul(\s, \A, \mufx, \pi^m)$. Here $U(\{0,1\}^{f(T)})$ is the uniform distribution over bit strings of length $f(T)$ and we write $\pi^m, \x \sim U(\{0,1\}^{f(T)})$ with slight abuse of notation, since $\pi^m$ is not drawn from $U(\{0,1\}^{f(T)})$ but rather is determined by $\x$ which is drawn from $U(\{0,1\}^{f(T)})$.
\item The analysis above results in a lower bound on $\rplus$. The last step is to use \Cref{lem:prod-vs-add} to obtain a lower bound on $\rmul$.
\end{enumerate}

Step 7 is essentially an application of the probabilistic method: if a randomly chosen $\mufx$ has high expected regret, then the worst-case $\bfmu$ also has high expected regret. We have included subscripts in the expectations above to distinguish between the randomness over $\x$ and $\s,\A$. When subscripts are omitted, the expected value is over all randomness, i.e., $\x,\s,$ and $\A$.

\subsection{Proof}

\begin{lemma}\label{lem:prod-vs-add}
If $\mu_t^m(x_t) \ge \mu_t(x_t,y_t)$ for all $t$, then $\rplus \le \rmul$. If $\mu_t(x_t,y_t) > 0$ for all $t$, then $\rmul \le \mfrac{\rplus}{\min_{t\in[T]} \mu_t(x_t,y_t)}$.
\end{lemma}
\begin{proof}
Recall the standard inequalities $1-\frac{1}{a} \le \log a \le a-1$ for any $a > 0$.

\textbf{Part 1: $\rplus \le \rmul$.} If $\mu_t(x_t,y_t) = 0$ for any $t \in [T]$, then $\rmul=\infty$ and the claim is trivially satisfied. Thus assume $\mu_t(x_t,y_t) > 0$ for all $t \in [T]$. 
\begin{align*}
\rplus =&\  \sum_{t=1}^T \mu_t^m(x_t) -\sum_{t=1}^T \mu_t(x_t,y_t)  &&  (\text{Definition of $\rplus$})\\
=&\  \sum_{t=1}^T \frac{\mu_t^m(x_t) -\mu_t(x_t,y_t)}{\mu_t^m(x_t)}  &&  (\text{$ \mu_t(x_t,y_t) \le \mu_t^m(x_t)$ and $0 \le \mu_t^m(x_t) \le 1$})\\
\le&\  \sum_{t=1}^T  \log\left(\frac{\mu_t^m(x_t)}{\mu_t(x_t,y_t)}\right)  && \Big(\text{$1-\frac{1}{a}\le \log a$ for any $a > 0$}\Big)\\
=&\  \log \prod_{t=1}^T  \mu_t^m(x_t) - \log \prod_{t=1}^T  \mu_t(x_t,y_t) && (\text{Properties of logarithms})\\
=&\ \rmul && (\text{Definition of $\rmul$})
\end{align*}
\textbf{Part 2: $\rmul \le \rplus/ \min_{t\in[T]} \mu_t(x_t,y_t)$}. We have
\begin{align*}
\rmul =&\ \log \prod_{t=1}^T  \mu_t^m(x_t) - \log \prod_{t=1}^T  \mu_t(x_t,y_t) && (\text{Definition of $\rmul$ given $\mu_t(x_t,y_t) > 0\ \forall t \in [T]$})\\
=&\ \sum_{t=1}^T  \log\left(\frac{\mu_t^m(x_t)}{\mu_t(x_t,y_t)}\right) && (\text{Properties of logarithms})\\
\le&\ \sum_{t=1}^T  \left(\frac{\mu_t^m(x_t) - \mu_t(x_t,y_t)}{\mu_t(x_t,y_t)}\right) && (\text{$\log a \le a-1$ for any $a > 0$})\\
\le&\  \sum_{t=1}^T  \frac{\mu_t(x_t,y_t)}{\min_{i\in[T]} \mu_i(x_i,y_i)}\left(\frac{\mu_t^m(x_t) - \mu_t(x_t,y_t)}{\mu_t(x_t,y_t)}\right) && (\text{$\mu_t(x_t,y_t) \ge \min_{i \in [T]} \mu_i(x_i,y_i) > 0\ \forall t \in [T]$})\\
=&\  \frac{1}{\min_{t\in[T]} \mu_t(x_t,y_t)}\sum_{t=1}^T  (\mu_t^m(x_t) - \mu_t(x_t,y_t)) && (\text{Arithmetic})\\
=&\  \frac{\rplus}{\min_{t\in[T]} \mu_t(x_t,y_t)} && (\text{Definition of $\rplus$})
\end{align*}
as claimed.
\end{proof}

\begin{lemma}\label{lem:density}
Let $a, b: \bbrspos \to \bbrspos$ be functions such that $a(x) \in o(b(x))$. Then $c(x) = \sqrt{a(x)b(x)}$ satisfies $a(x) \in o(c(x))$ and $c(x) \in o(b(x))$.
\end{lemma}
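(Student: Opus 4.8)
The plan is to verify the two asymptotic relations directly from the definition of $o(\cdot)$, treating $c(x) = \sqrt{a(x)^{1+\delta}b(x)^{1-\delta}}$ as the geometric-mean-like interpolation between $a$ and $b$ that it is. Write $\delta \in (-1,1)$ so that both exponents $1+\delta$ and $1-\delta$ are strictly positive, which is what makes the manipulations legitimate. The key observation is that the ratios can be rewritten as powers of $a(x)/b(x)$: specifically
\[
\frac{a(x)}{c(x)} = \frac{a(x)}{\sqrt{a(x)^{1+\delta}b(x)^{1-\delta}}} = \left(\frac{a(x)}{b(x)}\right)^{(1-\delta)/2}
\quad\text{and}\quad
\frac{c(x)}{b(x)} = \left(\frac{a(x)}{b(x)}\right)^{(1+\delta)/2}.
\]
Since $a(x) \in o(b(x))$ means $a(x)/b(x) \to 0$, and both exponents $(1-\delta)/2$ and $(1+\delta)/2$ are strictly positive constants, raising a sequence tending to $0$ to a fixed positive power still gives a sequence tending to $0$. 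Hence $a(x)/c(x) \to 0$ and $c(x)/b(x) \to 0$, which is exactly $a(x) \in o(c(x))$ and $c(x) \in o(b(x))$.

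First I would note the positivity of the exponents and record that $a, b$ are $\bbn \to \bbn$ valued so the expressions are well-defined (in particular $a(x) \ge 1$, $b(x) \ge 1$, so no division-by-zero or fractional-root-of-negative issues arise; one should perhaps remark that $c(x)$ need not be integer-valued, but the statement as used only needs the asymptotic comparison, and if integrality is wanted one can replace $c$ by $\lceil c \rceil$ without affecting the $o(\cdot)$ relations). Then I would state the two algebraic identities above, which are immediate from $c(x)^2 = a(x)^{1+\delta}b(x)^{1-\delta}$. Finally I would invoke continuity of $t \mapsto t^{\alpha}$ at $t=0$ for $\alpha > 0$ to push the limit through.

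I do not expect any serious obstacle here; the lemma is essentially a bookkeeping fact about geometric interpolation. The only mild subtlety is making sure the definition of $o(\cdot)$ being used is the limit-of-ratio definition (as opposed to the paper's earlier ``there exists a dominating function'' phrasing for $|Q_T|$); but for functions $\bbn \to \bbn$ these coincide, so I would just use $a(x)/b(x) \to 0$ throughout. If one wanted to avoid even mentioning limits, one could instead argue from first principles: given $\epsilon > 0$, pick $N$ so that $a(x)/b(x) < \epsilon^{2/(1-\delta)}$ for $x \ge N$, whence $a(x)/c(x) < \epsilon$; and similarly for the other relation with exponent $2/(1+\delta)$. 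Either phrasing completes the proof in a few lines.
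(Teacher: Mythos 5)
Your proof is correct and takes essentially the same approach as the paper: rewrite $a(x)/c(x)$ and $c(x)/b(x)$ as the positive powers $(a(x)/b(x))^{(1-\delta)/2}$ and $(a(x)/b(x))^{(1+\delta)/2}$, then push the limit $a(x)/b(x)\to 0$ through. The remarks about integrality of $c$ and the choice of definition of $o(\cdot)$ are sensible but not needed; the paper simply uses the limit-of-ratio form directly.
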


\begin{proof}
Since $a$ and $b$ are strictly positive (and thus $c$ is as well), we have
\[
\frac{a(x)}{c(x)} = \frac{a(x)}{\sqrt{a(x)b(x)}} = \sqrt{\frac{a(x)}{b(x)}} = \frac{\sqrt{a(x)b(x)}}{b(x)} = \frac{c(x)}{b(x)}
\]
Then $a(x) \in o(b(x))$ implies
\begin{align*}
\lim_{x\to\infty} \frac{a(x)}{c(x)} = \lim_{x\to\infty} \frac{c(x)}{b(x)} = \lim_{x\to\infty} \sqrt{\frac{a(x)}{b(x)}} = 0
\end{align*}
as required.
\end{proof}

\begin{lemma}
    \label{lem:chernoff}
Let $z_1,\dots,z_n$ be i.i.d. variables in $\{0,1\}$ and let $Z = \sum_{i=1}^n z_i$. If $\E[Z] \ge W$, then $\Pr\big[Z \le W/2\big] \le \exp(-W/8)$.
\end{lemma}
\begin{proof}
By the Chernoff bound for i.i.d. binary variables, we have $\Pr[Z \le \E[Z]/2] \le \exp(-\E[Z]/8)$. Since $-\E[Z] \le -W$ and $\exp$ is an increasing function, we have $\exp(-\E[Z]/8) \le \exp(-W/8)$. Also, $W/2 \le E[Z]/2$ implies $\Pr[Z \le W/2] \le \Pr[Z \le \E[Z]/2]$. Combining these inequalities proves the lemma.
\end{proof}

\begin{lemma}\label{lem:concentrate}
Let $S \subseteq [f(T)]$ be any nonempty subset of sections. Then
\[
\Pr\left[\sum_{j \in S} |M_j| \le \frac{T|S|}{4f(T)}\right] \le \exp\left(\frac{-T}{16f(T)}\right)
\]
\end{lemma}

\begin{proof}
Fix any $j \in [f(T)]$. For each $t \in [T]$ , define the random variable $z_t$ by $z_t = 1$ if $t \in M_j$ for some $j \in S$ and 0 otherwise. We have $t \in M_j$ iff $x_t$ falls within a particular interval of length $\frac{1}{2f(T)}$. Since these intervals are disjoint for different $j$'s, we have $z_t = 1$ iff $x_t$ falls within a portion of the input space with total measure $\frac{|S|}{2f(T)}$. Since $x_t$ is uniformly random across $[0,1]$, we have $\E[z_t] = \frac{|S|}{2f(T)}$. Then $\E[\sum_{t=1}^T z_t] = \E[\sum_{j \in S} |M_j|] = \frac{T|S|}{2f(T)}$. Furthermore, since $x_1,\dots,x_T$ are i.i.d., so are $z_1,\dots,z_T$. Then by Lemma~\ref{lem:chernoff},
\[
\Pr\left[\sum_{j \in S} |M_j| \le \frac{T|S|}{4f(T)}\right] \le \exp\left(\frac{-T|S|}{16f(T)}\right) \le \exp\left(\frac{-T}{16f(T)}\right)
\]
with the last step due to $|S| \ge 1$.
\end{proof}

\begin{lemma}
\label{lem:neg-j-large}
We have
\[
\Pr\left[|J_{\neg Q}'| \le \frac{f(T) - \E[|Q_T|]}{4}\right] \le \exp\left(-\frac{f(T) - \E[|Q_T|]}{16}\right)
\]
\end{lemma}
\begin{proof}
Define a random variable $z_j = \bfone_{j \in J_{\neg Q}'}$ for each $j \in J_{\neg Q}$. By definition, if $j \in J_{\neg Q}$, no input in $X_j$ is queried. Since queries outside of $X_j$ provide no information about $a_j$, the agent's actions must be independent of $a_j$. In particular, the random variables $a_j$ and $y^j$ are independent. Combining that independence with $\Pr[a_j = 0] = \Pr[a_j = 1] = 0.5$ yields $\Pr[z_j = 1] = 0.5$ for all $j \in J_{\neg Q}$. Then
\begin{align*}
\E\left[|J_{\neg Q}'|\right] =&\ \E\left[\sum_{j \in J_{\neg Q}} z_j\right]\\
=&\ |J_{\neg Q}|/2\\
\ge&\ \frac{f(T) - \E[|Q_T|]}{2}
\end{align*}
Furthermore, since $a_1,\dots,a_{f(T)}$ are independent, the random variables $\{z_j: j \in J_{\neg Q}\}$ are also independent. Applying Lemma~\ref{lem:chernoff} yields the desired bound.
\end{proof}
\begin{lemma}\label{lem:neg-error}
Suppose $f:\bbn\to \bbn$ and independently sample $\x \sim U(\{0,1\}^{f(T)})$ and $\s \sim U(\X)^T$.\footnote{That is, the entire set  $\{a_1,\dots,a_{f(T)},x_1,\dots,x_T\}$ is mutually independent.}  Then with probability at least $1 - \exp\big(\frac{-T}{16f(T)}\big) - \exp\big(-\frac{f(T) - \E[|Q_T|]}{16}\big)$,
\begin{align*}
\rplus \ge \frac{LT(f(T) - \E[|Q_T|])}{2^7 f(T)^2}
\end{align*}
\end{lemma}

\begin{proof}
Consider any $j \in J_{\neg Q}'$ and $t \in M_j' \subseteq M_j$. By definition of $M_j$, we have $|m_j - x_t| \le \frac{1}{4f(T)}$. Then by the definition of $\mufx$,
\begin{align*}
\mufx(x_t, y_t) =&\ 1 - L \left(\frac{1}{2f(T)} - |x_t - m_j|\right)\\
\le&\ 1 - L \left(\frac{1}{2f(T)} - \frac{1}{4f(T)}\right)\\
=&\ 1 - \frac{L}{4f(T)} 
\end{align*}
Since $\mufx^m(x_t) \ge \mufx(x_t,y_t)$ always, we can safely restrict ourselves to time steps $t \in M_j'$ for some $j \in J_{\neg Q}'$ and still obtain a lower bound:
\begin{align*}
\rplus =&\ \sum_{t=1}^T (\mufx^m(x_t) - \mufx(x_t,y_t)) && (\text{Definition of $\rplus$})\\
\ge&\ \sum_{j \in J_{\neg Q}'} \sum_{t\in M_j'} \big(\mufx^m(x_t) -  \mufx(x_t,y_t)\big) && (\mufx^m(x_t) \ge \mufx(x_t,y_t))\\
\ge&\ \sum_{j \in J_{\neg Q}'} \sum_{t\in M_j'} (1 - \mufx(x_t,y_t)) && (\mufx^m(x_t)=1 \text{ always})\\
\ge&\ \sum_{j \in J_{\neg Q}'} \sum_{t\in M_j'} \left(1 - 1 + \frac{L}{4f(T)} \right) && (\text{bound on $\mufx(x_t,y_t)$ for $t \in M_j'$})\\
=&\ \sum_{j \in J_{\neg Q}'} \frac{L|M_j'|}{4f(T)}  && (\text{Simplifying inner sum})
\end{align*}
 Since $j \in J_{\neg Q}$, the mentor is not queried on any time step $t \in M_j$, so $y_t \in \{0,1\}$ for all $t \in M_j$. Since the agent chooses one of two actions for each $t \in M_j$, the more frequent action must be chosen at least half of the time: $y_t = y^j$ for at least half of the time steps in $M_j$. Since $a_j \ne y^j$ for $j \in J_{\neg Q}'$, we have $y_t = y^j \ne a_j$ for those time steps, so $|M_j'| \ge |M_j|/2$. Thus
 \[
 \rplus \ge \sum_{j \in J_{\neg Q}'} \frac{L|M_j|}{8f(T)}
 \]
By Lemma~\ref{lem:concentrate}, \Cref{lem:neg-j-large}, and the union bound, with probability at least $1 - \exp\big(\frac{-T}{16f(T)}\big) - \exp\big(-\frac{f(T) - \E[|Q_T|]}{16}\big)$ we have $\sum_{j \in J_{\neg Q}'} |M_j| \ge \mfrac{T|J_{\neg Q}'|}{4f(T)}$ for all $j \in [f(T)]$ and $|J_{\neg Q}'| \ge \mfrac{f(T) - \E[|Q_T|]}{4}$. Assuming those inequalities hold, we have
\begin{align*}
\rplus \ge&\ \sum_{j \in J_{\neg Q}'} \frac{L|M_j|}{8f(T)}\\
\ge&\ \frac{L}{8f(T)} \cdot \frac{T|J_{\neg Q}'|}{4f(T)}\\
\ge&\ \frac{L}{8f(T)} \cdot \frac{T}{4f(T)} \cdot \frac{f(T) - \E[|Q_T|]}{4}\\
=&\ \frac{LT(f(T) - \E[|Q_T|])}{2^7 f(T)^2}
\end{align*}
as required.
\end{proof}
For a given $f:\bbn\to\bbn$, define $\alpha_f(T) = \exp\big(\frac{-T}{16f(T)}\big) + \exp\big(-\frac{f(T) - \E[|Q_T|]}{16}\big)$ for brevity.
\thmNeg*

\begin{proof}
If the algorithm has sublinear queries, then there exists $g(T) \in o(T)$ such that $\sup_{\bfmu,\pi^m} \E_{\s,\A}[|Q_T|] \le g(T)$. Consider any such $g(T)$ satisfying $g(T) > 0$. Since this holds for every $\bfmu$, it also holds in expectation over $\x \sim U(\{0,1\})^{f(T)}$, so $\E_{\x,\s,\A}[|Q_T|] = \E[|Q_T|] \le g(T)$. 

Next, Lemma~\ref{lem:density} gives us  $g(T) \in o(\sqrt{g(T)T})$ and $\sqrt{g(T)T} \in o(T)$. Let $f(T) = \lceil \sqrt{g(T) T} \rceil$: then $f(T) \in \Theta(\sqrt{g(T)T})$, so $g(T) \in o(f(T))$ and $f(T) \in o(T)$. First, this implies that $\lim_{T\to\infty} \alpha_f(T) = 0$. Second, $g(T) \in o(f(T))$ implies that exists $T_0$ such that $ g(T) \le f(T)/2$ for all $T \ge T_0$. We also have $\rplus \ge 0$ always since $\mufx^m(x_t) \ge \mufx(x_t,y_t)$ always. Then for all $T \ge T_0$ we have
\begin{align*}
&\ \E_{\pi^m,\x \sim U(\{0,1\}^{f(T)})}\ \, \E_{\s,\A}\ \left[\rplus(\s, \A, \bfmu, \pi^m)\right]\\ 
\ge&\ \alpha_f(T) \cdot 0 + \big(1-\alpha_f(T)\big) \left(\frac{LT(f(T) - \E[|Q_T|]}{2^7f(T)^2}\right) && (\text{\Cref{lem:neg-error} and $\rplus\ge 0$})\\
\ge&\ \big(1-\alpha_f(T)\big) \left(\frac{LT(f(T) - g(T))}{2^7f(T)^2}\right) && (\text{$\E[Q_T] \le g(T)$})\\
\ge&\  \big(1-\alpha_f(T)\big) \left(\frac{LT}{2^8 f(T)}\right)&& (\text{$g(T) \le f(T)/2$})
\end{align*}
Since $\lim_{T\to\infty} \alpha_f(T) = 0$, 
\begin{align*}
\sup_{\bfmu, \pi^m}\ \E_{\s,\A}\ \big[\rplus(\s, \A, \bfmu, \pi^m)\big] \ge&\ \E_{\pi^m,\x \sim U(\{0,1\}^{f(T)})}\ \, \E_{\s,\A}\ \big[\rplus(\s, \A, (\mufx,\dots,\mufx), \pi^m)\big]\\
\ge&\ \big(1-\alpha_f(T)\big) \left(\frac{LT}{2^8 f(T)}\right)\\
\in &\ \Omega\left(\frac{LT}{f(T)}\right)\\
=&\ \Omega\left(L \sqrt{\frac{T}{g(T)}}\right)
\end{align*}
This holds for any $g(T) \in o(T)$ such that $\sup_{\bfmu} \E[|Q_T|] \le g(T)$ and $g(T) > 0$. Thus we can simply set  $g(T) = \sup_{\bfmu,\pi^m} \E[|Q_T|] + 1$, since $\sup_{\bfmu,\pi^m} \E[|Q_T|] $ is indeed a function of only $T$.

Since $\mufx^m(x_t) \ge \mufx(x_t,y_t)$ for all $t \in [T]$, \Cref{lem:prod-vs-add} implies that
\begin{align*}
\sup_{\bfmu,\pi^m}\ \E_{\s,\A}\ \big[\rmul(\s, \A, \bfmu, \pi^m)\big] \ge&\ \E_{\pi^m,\x \sim U(\{0,1\}^{f(T)})}\ \, \E_{\s,\A}\ \big[\rmul(\s, \A, (\mufx,\dots,\mufx), \pi^m)\big]\\
\in&\ \Omega\left(L \sqrt{\frac{T}{\sup_{\bfmu,\pi^m}\E[|Q_T|] + 1}}\right)
\end{align*}
completing the proof.\end{proof}

\corNegProb*

\begin{proof}
We have $\prod_{t=1}^T \mufx^m(x_t) = 1$ from our construction. Then \Cref{lem:prod-vs-add} implies that $\rmul \ge \rplus$, so 
\begin{align*}
\exp (-\rplus) \ge&\ \exp(-\rmul)\\
=&\ \exp\left( \log \prod_{t=1}^T\mufx(x_t,y_t) - \log 1 \right)\\
=&\ \prod_{t=1}^T\mufx(x_t,y_t)
\end{align*}
Then by \Cref{lem:neg-error}, with probability $1-\alpha_f(T)$,
\begin{align*}
\prod_{t=1}^T \mufx(x_t,y_t) \le \exp\left( -\frac{LT(f(T) - \E[|Q_T|])}{2^7 f(T)^2}\right) \le \exp\left( -\frac{LT}{2^8 f(T)}\right)
\end{align*}
Since $\prod_{t=1}^T \mufx(x_t,y_t) \le 1$ always, 
\begin{align*}
\lim_{T\to\infty} \E_{\pi^m,\x \sim U(\{0,1\}^{f(T)})} \ \E_{\s,\A}\ \left[\prod_{t=1}^T \mufx(x_t,y_t)\right] \le&\
\lim_{T\to\infty} \E_{\pi^m,\x \sim U(\{0,1\}^{f(T)})} \ \E_{\s,\A}\ \left[\prod_{t=1}^T \mufx(x_t,y_t)\right] \\
\le&\ \lim_{T\to\infty} \left(1 \cdot \alpha_f(T) + (1-\alpha_f(T))\cdot \exp\left( -\frac{LT}{2^8 f(T)}\right)\right) \\
\le&\ \lim_{T\to\infty} (1-\alpha_f(T))\cdot \lim_{T\to\infty} \exp\left( -\frac{LT}{2^8 f(T)}\right)\\
=&\ 1 \cdot \exp(-\infty)\\
=&\ 0
\end{align*}
Since this upper bound holds for a randomly chosen $\mufx,\pi^m$, the same upper bound holds for a worst-case choice of $\bfmu,\pi^m$ among $\bfmu,\pi^m$ which satisfy $\mu_t^m(x) = 1$ for all $t \in [T],x \in \X$. Formally,
\[
\lim_{T\to\infty}\ \sup_{\bfmu,\pi^m:\, \mu_t(x) = 1\, \forall t,x} \E\left[\prod_{t=1}^T \mu_t(x_t,y_t)\right] \le 0
\]
Since $\prod_{t=1}^T \mu_t(x_t,y_t) \ge 0$ always, the inequality above holds with equality.
\end{proof}

\section{Proof of Theorem~\ref{thm:pos-nd}}\label{sec:nd-proof}

\subsection{Context on \Cref{lem:russo}}\label{sec:russo}

Before diving into the main proof, we provide some context on \Cref{lem:russo} from \Cref{sec:pos-nd}:

\lemRusso*

\Cref{lem:russo} is a restatement and simplification of Lemma 3.5 in \citet{russo2024online} with the following  differences:
\begin{enumerate}[leftmargin=1.5em,
    topsep=0.5ex,
    partopsep=0pt,
    parsep=0pt,
    itemsep=0.5ex]
    \item They parametrize their algorithm by the expected number of queries $\hat{k}$ instead of the query probability $p = \hat{k} / T$. 
    \item They include a second parameter $k$, which is the eventual target number of queries for their unconditional query bound. In our case, an expected query bound is sufficient, so we simply set $k = \hat{k}$. 
    \item They provide a second bound which is tighter for small $k$; that bound is less useful for us so we omit it. 
    \item Their ``actions'' correspond to policies in our setting, not actions in $\Y$. Their number of actions $n$ corresponds to $|\tilpi|$.
    \item We include an expectation over both loss terms, while they only include an expectation over the agent's loss. This is because  an adaptive adversary may choose the loss function for time $t$ in a randomized manner. Since we eventually set $\ell(t,\pi) = \bfone(\pi(x_t) \ne \pi^m(x_t))$, the randomization in $\ell$ corresponds to the randomization in $x_t$.
\end{enumerate}
Altogether, since \citet{russo2024online} set $\eta = \max\Big(\frac{1}{T}\sqrt{\frac{\hat{k} \log n}{2}}, \frac{k\hat{k}}{\sqrt{2}T^2}\Big)$, we end up with $\eta = \max\Big(\sqrt{\frac{p\log |\tilpi|}{2T}},\: \frac{p^2}{\sqrt{2}} \Big)$. \Cref{alg:hedge} provides precise pseudocode for the \textsc{HedgeWithQueries} algorithm to which \Cref{lem:russo} refers.

\begin{algorithm}[tb]
\centering
\begin{algorithmic}
\FUNCTION{\textsc{HedgeWithQueries}$(p \in (0,1],\: \text{finite policy class }\tilpi,\: \text{unknown } \ell: [T] \times \tilpi \to [0,1])$}
\STATE $w(\pi) \gets 1$ for all $\pi \in \tilpi$
\STATE $\eta \gets \max\big(\sqrt{\frac{p\log |\tilpi|}{2T}},\: \frac{p^2}{\sqrt{2}} \big)$
\FOR{$t$ \textbf{from} $1$ \textbf{to} $T$}
        \STATE with probability $p:$ $\texttt{hedgeQuery} \gets \texttt{true}$
        \STATE with probability $1-p:$ $\texttt{hedgeQuery} \gets \texttt{false}$
        \IF{\texttt{hedgeQuery}}
                \STATE Query and observe $\ell(t, \pi)$ for all $\pi \in \tilpi$
                \STATE $\ell^* \gets \min_{\pi \in \tilpi} \ell(t, \pi)$
                \STATE $w(\pi) \gets w(\pi)\cdot \exp(-\eta(\ell(t, \pi) - \ell^*))$ for all $\pi \in \tilpi$
                \STATE Select policy  $\arg\min_{\pi \in \tilpi} \ell(t, \pi)$        
        \ELSE
                \STATE $P(\pi) \gets w(\pi) / \sum_{\pi' \in \tilpi} w(\pi')$ for all $ \pi \in \tilpi$
                \STATE Sample $\pi_t \sim P$
                \STATE Select policy $\pi_t$
        \ENDIF
\ENDFOR
\ENDFUNCTION
\end{algorithmic}
\caption{A variant of the Hedge algorithm which only observes losses in response to queries.}
\label{alg:hedge}
\end{algorithm}

\subsection{Main proof}

We use the following notation throughout the proof:
\begin{enumerate}[leftmargin=1.5em,
    topsep=0.5ex,
    partopsep=0pt,
    parsep=0pt,
    itemsep=0.5ex]
    \item For each $t \in [T]$, let $S_t$ refer to the value of $S$ at the start of time step $t$.
    \item Let $M_T = \{t \in [T]: \pi_t(x_t) \ne \pi^m(x_t)\}$ be the set of time steps where Hedge's proposed action doesn't match the mentor's. Note that $|M_T|$ upper bounds the number of mistakes the algorithm makes (the number of mistakes could be smaller, since the algorithm sometimes queries instead of taking action $\pi_t(x_t)$). 
    \item  For $S \subseteq \X$, let $\vol(S)$ denote the $n$-dimensional Lebesgue measure of $S$.
    \item With slight abuse of notation, we will use inequalities of the form $f(T) \le g(T) + O(h(T))$ to mean that there exists a constant $C$ such that $f(T) \le g(T) + Ch(T)$.
    \item We will use ``Case 1'' to refer to finite VC dimension and $\sigma$-smooth $\s$ and ``Case 2'' to refer to finite Littlestone dimension.
\end{enumerate}

\begin{restatable}{lemma}{ndNumErrors}
\label{lem:nd-num-errors}
Let $\Y = \{0,1\}$. Assume $\pi^m \in \Pi$ where either (1) $\Pi$ has finite VC dimension $d$ and $\s$ is $\sigma$-smooth, or (2) $\Pi$ has finite Littlestone dimension $d$. Then for any $T \in \bbn$ and $\ep \ge 1/T$,\footnote{Note that this lemma omits the assumption of $\ep\le (\frac{\mum}{2L})^n$, since we do not need it for this lemma, and we would like to apply this lemma in the multi-action case without that assumption.} \Cref{alg:nd} satisfies
\[
\E [|M_T|] \in  O\left(\frac{d}{\sigma} T \ep \log (T + 1/\ep) \right)
\]
\end{restatable}

\begin{proof}
Define $\ell: [T] \times \tilpi \to [0,1]$ by $\ell(t, \pi) = \bfone(\pi(x_t) \ne \pi^m(x_t))$, and let $w^h$ and $\pi_t^h$ denote the values of $w$ and $\pi_t$ respectively in \textsc{HedgeWithQueries}, while $w$ and $\pi_t$ refer to the variables in \Cref{alg:nd}. Then $w$ and $w^h$ evolve in the exact same way, so the distributions of $\pi_t$ and $\pi_t^h$ coincide. Also, $\ep \ge 1/T$ implies that $ p =1/\sqrt{\ep T} \in (0,1]$. Thus by \Cref{lem:russo},
\begin{align*}
\sum_{t=1}^T \E[\ell(t, \pi_t)] - \min_{\stilpi \in \tilpi} \sum_{t=1}^T \E[\ell(t, \stilpi)] \le&\ \frac{\log |\tilpi| }{p^2}\\
=&\  T \ep \log |\tilpi| 
\end{align*}
Since $|M_T| = \sum_{t=1}^T \bfone(\pi_t(x_t) \ne \pi^m(x_t)) = \sum_{t=1}^T \ell(t, \pi_t)$, we have
\[
\E [|M_T|] \le  T \ep\log |\tilpi| + \min_{\stilpi \in \tilpi} \sum_{t=1}^T \E[\bfone(\stilpi(x_t) \ne \pi^m(x_t))]
\]
\textbf{Case 1:} Since $\tilpi$ is a smooth $\ep$-cover and $\pi^m \in \Pi$, \Cref{lem:smooth-concentrate} implies that $\E[\bfone(\stilpi(x_t) \ne \pi^m(x_t))] \le \ep/\sigma$ for any $\stilpi \in \tilpi$. Since $|\tilpi| \le (41/\ep)^d$ by construction (and such a $\tilpi$ is guaranteed to exist by \Cref{lem:smooth-cover}), we get
\begin{align*}
\E [|M_T|] \le&\  T \ep\log ((41/\ep)^d) + \min_{\stilpi \in \tilpi} \sum_{t=1}^T \frac{\ep}{\sigma}\\
=&\ d T \ep\log (41/\ep) + \frac{T\ep}{\sigma} \\
\in&\ O\left(\frac{d}{\sigma} T \ep \log (T + 1/\ep)\right)
\end{align*}
\textbf{Case 2:} Since $\tilpi$ is an adversarial cover of $\Pi$ and $\pi^m \in \Pi$, there exists $\stilpi \in \tilpi$ such that $\sum_{t=1}^T \bfone(\stilpi(x_t) \ne \pi^m(x_t)) = 0$. Since $|\tilpi| \le (eT/d)^d$ (with such a $\tilpi$ guaranteed to exist by \Cref{lem:adv-cover}), 
\begin{align*}
\E [|M_T|] \le&\ T \ep\log |\tilpi| + \min_{\stilpi \in \tilpi} \sum_{t=1}^T \bfone(\stilpi(x_t) \ne \pi^m(x_t))\\
\le&\ T \ep d \ln (eT/d)\\
\in&\ O\left(\frac{d}{\sigma} T \ep \log (T + 1/\ep)\right)
\end{align*}
as required.
\end{proof}

\begin{lemma}
\label{lem:nd-lipschitz-payoff}
For all $t \in [T]$, $\mu_t(x_t, y_t) \ge \mu_t^m(x_t) - L\ep^{1/n}$.
\end{lemma}
\begin{proof}
Consider an arbitrary $t \in [T]$. If $t \in Q_T$, then $\mu_t(x_t, y_t) = \mu_t^m(x_t)$ trivially, so assume $t\not\in Q_T$. Let $(x', y') =\argmin_{(x,y) \in S_t: \pi_t(x_t) = y} \norm{x_t-x}$. Since $t \not \in Q_T$, we must have $\norm{x_t - x'} \le \ep^{1/n}$.

We have $y' = \pi^m(x')$ by construction of $S_t$ and $\pi_t(x_t) = y'$ by construction of $y'$. Combining these with the local generalization assumption, we get
\begin{align*}
\mu_t(x_t, y_t) =&\ \mu_t(x_t, \pi_t(x_t))\\
=&\ \mu_t(x_t, \pi^m(x'))\\
\ge&\ \mu_t^m(x_t) - L\norm{x_t - x'}\\
\ge&\ \mu_t^m(x_t) - L\ep^{1/n}
\end{align*}
as required.
\end{proof}

\begin{lemma}
    \label{lem:pos-nd-regret}
Under the conditions of \Cref{thm:pos-nd-ep}, Algorithm~\ref{alg:nd} satisfies
\begin{align*}
\E\left[\rmul\right] \in&\ O\left(\frac{d L}{\sigma \mum}  T \ep^{1+1/n} \log (T + 1/\ep) \right)\\
\E\left[\rplus\right] \in&\ O\left(\frac{d L}{\sigma}  T \ep^{1+1/n} \log (T + 1/\ep) \right)
\end{align*}
\end{lemma}

\begin{proof}

We first claim that $y_t = \pi^m(x_t)$ for all $t \not \in M_T$. If $t \in Q_T$, the claim is immediate. If not, we have $y_t = \pi_t(x_t)$ by the definition of the algorithm and $\pi_t(x_t) = \pi^m(x_t)$ by the definition of $t \not\in M_T$. Thus $\mu_t(x_t, y_t) = \mu_t^m(x_t)$ for $t \not \in M_T$. For $t \in M_T$, \Cref{lem:nd-lipschitz-payoff} implies that $\mu_t^m(x_t) - \mu_t(x_t,y_t) \le L\ep^{1/n}$, so
\begin{align}
\rplus =&\ \sum_{t \in M_T} (\mu_t^m(x_t) - \mu_t(x_t, y_t)) \nonumber\\
\le&\ \sum_{t \in M_T}L\ep^{1/n} \nonumber\\
=&\ |M_T| L\ep^{1/n} \label{eq:rplus-upp}
\end{align}
Since $\ep \le \left(\mfrac{\mum}{2L}\right)^n$ by assumption, we have $L\ep^{1/n} \le \mum/2$ and thus $\mu_t(x_t,y_t) \ge \mu_t^m(x_t) - L\ep^{1/n} \ge \mum - \mum/2 = \mum / 2 > 0$ for all $t \in [T]$. Then by \Cref{lem:prod-vs-add},
\begin{align}
\rmul \le \frac{\rplus}{\mum/2} \le \frac{2|M_T| L\ep^{1/n}}{\mum} \label{eq:rmul-upp}
\end{align}
Taking the expectation and applying \Cref{lem:nd-num-errors} to Equations~\ref{eq:rplus-upp} and \ref{eq:rmul-upp} completes the proof.
\end{proof}

\begin{definition}
\label{def:packing}
Let $(K,\norm{\cdot})$ be a normed vector space and let $\delta > 0$. Then a multiset $S\subseteq K$ is a $\delta$-packing of $K$ if for all $a, b \in S$, $\norm{a-b} > \delta$. The $\delta$-packing number of $K$, denoted $\M(K,\norm{\cdot}, \delta)$, is the maximum cardinality of any $\delta$-packing of $K$.
\end{definition}

We only consider the Euclidean distance norm, so we just write $M(K, \norm{\cdot}, \delta) = M(K, \delta)$.

\begin{lemma}[Theorem 14.2 in \cite{wu_lecture_2020}]
    \label{lem:packing}
If $K \subset \bbr^n$ is convex, bounded, and contains a ball with radius $\delta > 0$, then
\[
\M(K,\delta) \le  \frac{3^n \vol(K)}{\delta^n \vol(B)}
\]
where $B$ is a unit ball.
\end{lemma}

\begin{lemma}[Jung's Theorem \citep{Jung1901}]
\label{lem:jung}
If $S \subset \bbr^n$ is compact, then there exists a closed ball with radius at most $\diam(S) \sqrt{\frac{n}{2(n+1)}}$ containing $S$.
\end{lemma}

\begin{lemma}\label{lem:pos-nd-queries}
Under the conditions of \Cref{thm:pos-nd-ep}, Algorithm~\ref{alg:nd} satisfies
\[
\E [|Q_T|] \in O\left(\sqrt{\frac{T}{\ep}} +\frac{d}{\sigma} T \ep \log (T + 1/\ep) +\frac{\E[\diam(\s)^n]}{\ep}\right)
\]
\end{lemma}

\begin{proof}
If $t \in Q_T$, then either $\texttt{hedgeQuery} = \texttt{true}$ or $\min_{(x,y) \in S_t: \pi_t(x_t) = y}\norm{x_t - x} > \ep^{1/n}$ (or both).
The expected number of time steps with $\texttt{hedgeQuery} = \texttt{true}$ is $p T = \sqrt{T/\ep}$. Let $\hat{Q} = \{t \in Q_T: \min_{(x,y) \in S_t: \pi_t(x_t) = y}\norm{x_t - x} > \ep^{1/n}\}$. We further subdivide $\hat{Q}$ into $\hat{Q}_1 = \{t\in \hat{Q}: \pi_t(x_t) \ne \pi^m(x_t)\}$ and $\hat{Q}_2 = \{t \in \hat{Q}: \pi_t(x_t) = \pi^m(x_t)\}$. Since $\hat{Q}_1 \subseteq M_T$, \Cref{lem:nd-num-errors} implies that $ \E[|\hat{Q}_1|] \in  O\left(\frac{d}{\sigma} T \ep \log(T + 1/\ep)\right)$.

Next, fix a $y \in \Y$ and let $X_y = \{x \in \s: \pi^m(x) = y\}$ be the multiset of observed inputs whose mentor action is $y$. Also let $\hats_2 = \{x_t: t \in \hat{Q}_2\}$ be the multiset of inputs associated with time steps in $\hat{Q}_2$. Note that $|\hats_2| = |\hat{Q}_2|$, since $\hats_2$ is a multiset. We claim that $\hats_2 \cap X_y$ is an $\ep^{1/n}$-packing of $X_y$. Suppose instead that there exists $x, x' \in \hats_2 \cap X_y$, with $\norm{x - x'} \le \ep^{1/n}$. WLOG assume $x$ was queried after $x'$ and let $t$ be the time step on which $x$ was queried. Since $x' \in \hats_2$, this implies $(x', \pi^m(x')) \in S_t$. Also, since $x, x' \in \hats_2$ we have $\pi_t(x_t) = \pi^m(x_t) = y = \pi^m(x')$. Therefore
\[
\min_{(x'', y'') \in S_t: y'' = \pi_t(x_t)} \norm{x_t - x''} \le \norm{x_t-x'} \le \ep^{1/n}
\]
which contradicts $t \in \hat{Q}$. Thus $\hats_2 \cap X_y$ is an $\ep^{1/n}$-packing of $X_y$.

By \Cref{lem:jung}, there exists a ball $B_1$ of radius $R:=\diam(\s)\sqrt{\frac{n}{2(n+1)}}$ which contains $\s$. Let $B_2$ be the ball with the same center as $B_1$ but with radius $\max(R, \ep^{1/n})$. Since $X_y \subset \s \subset B_1 \subset B_2$ and $\hats_2\cap X_y$ is an $\ep^{1/n}$-packing of $X_y$, $\hats_2 \cap X_y$ is also an $\ep^{1/n}$-packing of $B_2$. Also, $B_2$ must contain a ball of radius $\ep^{1/n}$, so Lemma~\ref{lem:packing} implies that 
\begin{align*}
|\hats_2 \cap X_y| \le&\ \M(B_2, \ep^{1/n})\\
\le&\ \frac{3^n \vol(B_2)}{\ep \vol(B)}\\
=&\ \big(\max(R, \ep^{1/n}) \big)^n\, \frac{3^n \vol(B)}{\ep \vol(B)}\\
=&\ \max\left(\diam(\s)^n \left(\frac{n}{2(n+1)}\right)^{n/2},\: \ep\right) \frac{3^n}{\ep}\\
\le&\ O\left(\frac{\diam(\s)^n}{\ep} +1\right)
\end{align*}
(The $+1$ is necessary for now since $\diam(\s)$ could theoretically be zero.) Since $\hats_2 \subseteq \{x_1,\dots,x_T\} \subseteq \cup_{y\in\Y}X_y$, we have $|\hats_2| \le \sum_{y \in \Y} |\hats_2\cap X_y|$ by the union bound. Therefore
\begin{align*}
\E [|Q_T|] \le&\ \sqrt{\frac{T}{\ep}} + \E[|\hat{Q}|] && \\
=&\ \sqrt{\frac{T}{\ep}} + \E[|\hat{Q}_1|] + \E[|\hat{Q}_2|]\\
=&\ \sqrt{\frac{T}{\ep}} + \E[|\hat{Q}_1|] + \E[|\hats_2|]\\
\le&\ \sqrt{\frac{T}{\ep}} + \E[|\hat{Q}_1|] + \E\left[\sum_{y \in \Y} |\hats_2\cap X_y|\right]\\\
\le&\ \sqrt{\frac{T}{\ep}} + O\left(\frac{d}{\sigma} T \ep \log (T + 1/\ep)\right) + \sum_{y \in \Y} O\left(\frac{\E[\diam(\s)^n]}{\ep} +1\right)  \\
\le&\ \sqrt{\frac{T}{\ep}} + O\left(\frac{d}{\sigma} T \ep \log (T + 1/\ep)\right) + |\Y| \cdot  O\left(\frac{\E[\diam(\s)^n]}{\ep} +1\right)  \\
\le&\ O\left(\sqrt{\frac{T}{\ep}} +\frac{d}{\sigma} T \ep \log (T + 1/\ep) +\frac{\E[\diam(\s)^n]}{\ep}\right)
\end{align*}
as required.
\end{proof}
\Cref{thm:pos-nd-ep} follows from Lemmas~\ref{lem:pos-nd-regret} and \ref{lem:pos-nd-queries}:

\thmNDep*

We then perform some arithmetic to obtain \Cref{thm:pos-nd}:

\thmND*

\begin{proof}
We have
\begin{align*}
\E \left[\rmul\right] \in&\  O\left(\frac{d L}{\sigma \mum} T^{1 - \frac{2n}{2n+1} -\frac{2}{2n+1}}  \left(\log T + \log (T^\frac{2n}{2n+1})\right)\right)\\
=&\ O\left(\frac{d L}{\sigma \mum} T^{\frac{-1}{2n+1}} \log T\right)
\end{align*}
and similarly for $\E[\rplus]$. For $\E[|Q_T|]$,
\begin{align*}
\E[|Q_T|] \in&\ O\left(\sqrt{T^{1+\frac{2n}{2n+1}}} +\frac{d}{\sigma} T^{1-\frac{-2n}{2n+1}} \left(\log T + \log (T^\frac{2n}{2n+1})\right)  + T^\frac{2n}{2n+1}\E[\diam(\s)^n]\right)\\
=&\ O\left(T^\frac{2n+0.5}{2n+1} +\frac{d}{\sigma} T^\frac{1}{2n+1} \log T + T^\frac{2n}{2n+1}\E[\diam(\s)^n]\right)\\
\le&\ O\left(T^\frac{4n+1}{4n+2}\Big(\frac{d}{\sigma} \log T + \E[\diam(\s)^n]\Big)\right)
\end{align*}
\end{proof}

\subsection{Adaptive adversaries}\label{sec:adaptive}

If $x_t$ is allowed to depend on the events of prior time steps, we say that the adversary is adaptive. In contrast, a non-adaptive or ``oblivious'' adversary must choose the entire input upfront. This distinction is not relevant for deterministic algorithms, since an adversary knows exactly how the algorithm will behave for any input. In other words, the adversary gains no new information during the execution of the algorithm. For randomized algorithms, an adaptive adversary can base the choice of $x_t$ on the results of randomization on previous time steps (but not on the current time step), while an oblivious adversary cannot.


In the standard online learning model, Hedge guarantees sublinear regret against both oblivious and adaptive adversaries (Chapter 5 of \citet{slivkins2019introduction} or Chapter 21 of \citet{shalev-shwartz_understanding_2014}). However, \citet{russo2024online} state their result only for oblivious adversaries. In order for our overall proof of \Cref{thm:pos-nd-ep} to hold for adaptive adversaries, \Cref{lem:russo} (Lemma 3.5 in \citealp{russo2024online}) must also hold for adaptive adversaries. In this section, we argue why the proof of \Cref{lem:russo} (Lemma 3.5 in their paper) goes through for adaptive adversaries as well. For the rest of \Cref{sec:adaptive}, lemma numbers refer to the numbering in \citet{russo2024online}.

\paragraph{The importance of independent queries.} Recall from \Cref{sec:russo} that \citet{russo2024online} allow two separate parameters $k$ and $\hat{k}$, which we unify for simplicity. Recall also that Lemma 3.5 refers to the variant of Hedge which queries with probability $p =\hat{k}/T = k/T$ independently on each time step (\Cref{alg:hedge}). More precisely, on each time step $t$, the algorithm samples $X_t \sim \text{Bernoulli}(p)$ and queries if $X_t = 1$. The key idea is that $X_t$ is independent of events on previous time steps. Thus even conditioning on the history up to time $t$, for any random variable $Y_t$ we can write
\[
\E[Y_t] = (1-p) \E[Y_t \mid X_t = 0] + p \E[Y_t \mid X_t = 1]
\]
This insight immediately extends Observation 3.3 to adaptive adversaries (with the minor modification that queries are now issued independently with probability $p$ on each time step instead of issuing $k$ uniformly distributed queries). Specifically, using the notation from \citet{russo2024online} where $i_t$ is the action chosen at time $t$, $i_t^0$ is the action chosen at time $t$ if a query is not issued, and $i_t^*$  is the optimal action at time $t$, we have
\begin{align*}
\E[\ell_t(i_t)] =&\ (1-p) \E[\ell_t(i_t^0)] + p \E[\ell_t(i_t^*)]\\
=&\ \left(1-\frac{k}{T}\right) \E[\ell_t(i_t^0)] + \frac{k}{T} \E[\ell_t(i_t^*)]
\end{align*}
The same logic applies to other statements like $\E[\hat{\ell}_t(i) \mid X_{\le t-1}, I_{\le t-1}] = \ell_t(i) - \ell_t(i_t^*)$ and immediately extends those statements to adaptive adversaries as well.

\paragraph{Applying Observation 3.3.} The other tricky part of the proof is applying Observation 3.3 using a new loss function $\hat{\ell}$ defined by $\hat{\ell}_t(i) = \frac{T}{\hat{k}}(\ell_t(i) - \ell_t(i_t^*)) \bfone(X_t = 1)$. To do so, we must argue that standard Hedge run on $\hat{\ell}$ is the ``counterpart without queries'' of \textsc{HedgeWithQueries}. Specifically, both algorithms must have the same weight vectors on every time step, and the only difference should be that \textsc{HedgeWithQueries} takes the optimal action on each time step independently with probability $p$ (and otherwise behaves the same as standard Hedge). On time steps with $X_t = 0$, standard Hedge observes $\hat{\ell}_t(i) = 0$ for all actions $i$ and thus makes no updates, and \textsc{HedgeWithQueries} makes no updates by definition. On time steps with $X_t = 1$, both algorithms perform the typical updates $w_{t+1}(i) = w_t(i) \cdot \exp(-\eta(\hat{\ell}_t(i) - \hat{\ell}_t(i_t^*)))$. Thus the weight vectors are the same for both algorithms on every time step. Furthermore, \textsc{HedgeWithQueries} takes the optimal action at time $t$ iff $X_t = 1$, which occurs independently with probability $p$ on each time step. Thus standard Hedge run on $\hat{\ell}$ is the ``counterpart without queries'' of \textsc{HedgeWithQueries}. Note that since $\hat{\ell}$ is itself a random variable, the law of iterated expectation is necessary to formalize this.

\paragraph{The rest of the proof.} The other elements of the proof of Lemma 3.5 are as follows:
\begin{enumerate}[leftmargin=1.5em,
    topsep=0.5ex,
    partopsep=0pt,
    parsep=0pt,
    itemsep=0.5ex]
\item Lemma 3.1, which analyzes the standard version of Hedge (i.e., no queries and losses are observed on every time step).
\item Applying Lemma 3.1 to $\hat{\ell}$.
\item Arithmetic and rearranging terms.
\end{enumerate}

The proof of Lemma 3.1 relies on simple arithmetic properties of the Hedge weights. Regardless of the adversary's behavior, $\hat{\ell}$ is a well-defined loss function, so Lemma 3.1 can be applied. Step 3 clearly has no dependence on the type of adversary. Thus we conclude that Lemma 3.5 extends to adaptive adversaries.

\section{Generalizing \Cref{thm:pos-nd} to many actions}\label{sec:multi}

We use the standard ``one versus rest'' reduction  (see, e.g., Chapter 29 of \citealp{shalev-shwartz_understanding_2014}). For each action $y$, we will learn a binary classifier which predicts whether action $y$ is the mentor's action. Formally, for each $y \in \Y$, define the policy class $\Pi_y = \{\pi_y: \pi \in \Pi \text{ and }\pi_y(x) = \bfone(\pi(x) = y)\ \forall x \in \X\}$. In words, for each policy $\pi: \X\to \Y$ in $\Pi$, there exists a policy $\pi_y: \X\to \{0,1\}$ in $\Pi_y$ such that $\pi_y(x) = \bfone(\pi(x) = y)$ for all $x \in \X$. 

\Cref{alg:multi} runs one copy of our binary-action algorithm (\Cref{alg:nd}) for each action $y \in \Y$. At each time step $t$, the copy for action $y$ returns an action $b_t^y$, with $b_t^y = 1$ indicating a belief that $y = \pi^m(x_t)$ and $b_t^y = 0$ indicating a belief that $y \ne \pi^m(x_t)$. (Note that $b_t^y = \hata$ is also possible, indicating that the mentor was queried.) 

The key idea is that if $b_t^y$ is correct for each action $y$, there will be exactly one $y$ such that $b_t^y = 1$, and specifically it will be $y = \pi^m(x_t)$. Thus we are guaranteed to take the mentor's action on such time steps. The analysis for \Cref{thm:pos-nd} (specifically, \Cref{lem:nd-num-errors}) bounds the number of time steps when a given copy of \Cref{alg:nd} is incorrect, so by the union bound, the number of time steps where \emph{any} copy is incorrect is $|\Y|$ times that bound. That in turn bounds the number of time steps where \Cref{alg:multi} takes an action other than the mentor's. Similarly, the number of queries made by \Cref{alg:multi} is at most $|\Y|$ times the bound from \Cref{thm:pos-nd}. The result is the following theorem:

\begin{algorithm}[tb]
\centering
\begin{algorithmic}
\STATE Inputs: $T \in \bbn,\: \ep \in \bbrspos,\: d \in \bbn,\,$ policy class $\Pi$
\FOR{$y \in \Y$}
\IF{$\Pi_y$ has VC dimension $d$}
    \STATE $\tilpi_y \gets$ any smooth $\ep$-cover of $\Pi_y$ of size at most $(41/\ep)^d$
\ELSIF{$\Pi_y$ has Littlestone dimension $d$}
    \STATE $\tilpi_y \gets$ any adversarial cover of $\Pi_y$ of size at most $(eT/d)^d$
\ENDIF
\ENDFOR
\FOR{$t$ \textbf{from} $1$ \textbf{to} $T$}
        \FOR{$y \in \Y$}
            \STATE $b_t^y \gets $ action at time $t$ from the copy of \Cref{alg:nd} running on $\Pi_y$ (with the same $T, \ep, d$)
         \ENDFOR
         \IF{$b_t^y \ne \hata$  $\forall y \in \Y$ and $\exists a\in \Y: b_t^y = 1$}
            \STATE Take any action $y$ with $b_t^y = 1$
         \ELSE
            \STATE Take an arbitrary action in $\Y$
        \ENDIF
\ENDFOR
\end{algorithmic}
\caption{extends \Cref{alg:nd} to many actions.}
\label{alg:multi}
\end{algorithm}

\begin{restatable}{theorem}{thmMulti}
\label{thm:multi}
Assume $\pi^m \in \Pi$ where either (1) $\Pi_y$ has finite VC dimension $d$ and $\s$ is $\sigma$-smooth or (2) $\Pi_y$ has finite Littlestone dimension $d$ for all $y \in \Y$. Then for any $T \in \bbn$, \Cref{alg:multi} with $\ep = T^\frac{-2n}{2n+1}$ satisfies
\begin{align*}
\E \left[\rmul\right] \in&\ O\left(\frac{|\Y| d L}{\sigma \mum} T^{\frac{-1}{2n+1}} \log T\right)\\ 
\E \left[\rplus\right] \in&\ O\left(\frac{|\Y| d L}{\sigma} T^{\frac{-1}{2n+1}} \log T\right)\\
\E[|Q_T|] \in&\ O\left(|\Y|T^\frac{4n+1}{4n+2}\left(\frac{d}{\sigma} \log T + \E[\diam(\s)^n]\right)\right) 
\end{align*}
\end{restatable}

We use the following terminology and notation in the proof of \Cref{thm:multi}:
\begin{enumerate}[leftmargin=1.5em,
    topsep=0.5ex,
    partopsep=0pt,
    parsep=0pt,
    itemsep=0.5ex]
    \item We refer to the copy of \Cref{alg:nd} running on $\Pi_y$ as ``copy $y$ of \Cref{alg:nd}''.
    \item Recall that $S_t$ refers to the value of $S$ at the start of time step $t$ in \Cref{alg:nd}. Let $\pi_t^y$ and $S_t^y$ refer to the values of $\pi_t$ and $S_t$ for copy $y$ of \Cref{alg:nd}.
    \item  Let $\pi^{my}: \X\to \{0,1\}$ be the policy defined by $\pi^{my}(x) = \bfone(\pi^m(x) = y)$. Note that querying the mentor tells the agent $\pi^m(x_t)$, which allows the agent to compute $\pi^{my}(x_t)$: this is necessary when \Cref{alg:nd} queries while running on some $\Pi_y$.
    \item  Let $M_T^y = \{t \in [T]: b_t^y \ne \pi^{my}(x_t)\}$ be the set of time steps where $\pi_t^y$ does not correctly determine whether the mentor would take action $y$ and let $M_T = \{t \in [T]: y_t \ne \pi^m(x_t)\}$  be the set of time steps where the agent's action does not match the mentor's.
\end{enumerate}

\begin{lemma}
 \label{lem:multi-errors}   
We have $|M_T| \le \sum_{y \in \Y} |M_T^y|$.
\end{lemma}

\begin{proof}
We claim that $M_T \subseteq \cup_{y \in \Y} M_T^y$. Suppose the opposite: then there exists $t \in M_T$ such that $b_t^y = \pi^{my}(x_t)$ for all $y \in \Y$. Since $\pi^m(x_t) \in \Y$, there is exactly one $y \in \Y$ such that $\bfone(\pi^m(x_t) = y) = \pi^{my}(x_t) =b_t^y = 1$. Specifically, this holds for $y = \pi^m(x_t)$. But then \Cref{alg:multi} takes action $\min\{y \in \Y: b_t^y = 1\} = \pi^m(x_t)$, which contradicts $t \in M_T$. Therefore $M_T \subseteq \cup_{y \in \Y} M_T^y$, and applying the union bound completes the proof.
\end{proof}

\begin{lemma}
\label{lem:multi-lipschitz}
For all $t \in [T]$, $\mu_t^m(x_t) - \mu_t(x_t, y_t) \le L \ep^{1/n}$.
\end{lemma}

\begin{proof}
The argument is similar to the proof of \Cref{lem:nd-lipschitz-payoff}. If $\mu_t^m(x_t) \ne \mu_t(x_t, y_t)$, then $y_t = y$ for some $y \in \Y$ where $b_t^y = 1$. Therefore copy $y$ of \Cref{alg:nd} did not query at time $t$ and $\pi_t^y(x_t) = 1$. Let $(x', y') =\argmin_{(x,y) \in S_t^y: \pi_t^y(x_t) = y} \norm{x_t-x}$. Then $\norm{x_t - x'} \le \ep^{1/n}$ and $y' = \pi_t^y(x_t) = 1$.

By construction of $S_t^y$, $y' = \pi^{my}(x')$ so $\pi^{my}(x') = 1$ which implies $\pi^m(x') = y$. Then by the local generalization assumption,
\[
\mu_t(x_t, y_t) = \mu_t(x_t, y) = \mu_t(x_t, \pi^m(x')) \ge \mu_t^m(x_t) - L\norm{x_t - x'} \ge \mu_t^m(x_t) - L\ep^{1/n}
\]
as required.
\end{proof}

We now proceed to the proof of \Cref{thm:multi}. 

\begin{proof}[Proof of \Cref{thm:multi}]
\Cref{thm:pos-nd} implies that each copy of \Cref{alg:nd} makes $O\big(T^\frac{4n+1}{4n+2}\big(\frac{d}{\sigma} \log T + \E[\diam(\s)^n]\big)\big)$ queries in expectation. Thus by linearity of expectation the expected number of queries made by \Cref{alg:multi} is $O\big(|\Y| T^\frac{4n+1}{4n+2}\big(\frac{d}{\sigma} \log T + \E[\diam(\s)^n]\big)\big)$.\footnote{This is an overestimate because the agent makes at most one query per time step, even if multiple copies request a query.} Similar to  the proof of \Cref{lem:pos-nd-regret}, we have 
\begin{align*}
\rplus =&\ \sum_{t \in M_T} (\mu_t^m(x_t) - \mu_t(x_t, y_t)) && (\text{$\mu_t^m(x_t) = \mu_t(x_t,y_t)$ for all $t\not\in M_T$}) \\
\le&\ \sum_{t \in M_T}L\ep^{1/n} && (\text{\Cref{lem:multi-lipschitz}})\\
=&\ |M_T| L\ep^{1/n} && (\text{Simplifying sum})\\
\le&\  L\ep^{1/n}\sum_{y\in \Y} |M_T^y| && (\text{\Cref{lem:multi-errors}})
\end{align*}
Since each copy satisfies the conditions of \Cref{lem:nd-num-errors}, we get
\begin{align*}
\E[\rplus] \le L\ep^{1/n} \sum_{y \in \Y} O\left(\frac{d}{\sigma} T \ep \log(1/\ep) \log T\right) = O\left(|\Y|L\ep^{1/n} \frac{d}{\sigma} T \ep \log(1/\ep) \log T\right)
\end{align*}
Since $\lim_{T\to\infty} \ep = 0$, there exists $T_0$ such that $L\ep^{1/n} \le \mum/2$ for all $T \ge T_0$. Then by \Cref{lem:prod-vs-add},
\begin{align}
\E[\rmul] \le \frac{\E[\rplus]}{\mum/2} \in O\left(|\Y|L\ep^{1/n} \frac{d}{\sigma \mum} T \ep \log(1/\ep) \log T\right)
\end{align}
Plugging $\ep = T^{\frac{-2n}{2n+1}}$ to the bounds above on $\E[\rplus]$ and $\E[\rmul]$ yields the desired bounds (see the arithmetic in the proof of \Cref{thm:pos-nd} in Appendix~\ref{sec:nd-proof}).
\end{proof}

\section{There exist policy classes which are learnable in our setting but not in the standard online model} \label{sec:positive-1d}

This section presents another algorithm with subconstant regret and sublinear queries, but under different assumptions. The primary takeaway here is that our algorithm can handle the class of thresholds on $[0,1]$, which is known to have infinite Littlestone dimension and thus be hard in the standard online learning model. (Example 21.4 in \citealp{shalev-shwartz_understanding_2014}).

Specifically, we assume a 1D input space and we allow the input sequence to be fully adversarial chosen. Instead of VC/Littlestone dimension, we consider the following notion of simplicity:

\begin{definition}\label{def:segments}
Given a mentor policy $\pi^m$, partition the input space $\X$ into intervals such that all inputs within each interval share the same mentor action. Let $\{X_1,\dots,X_k\}$ be a partition that minimizes the number of intervals. We call each $X_j$ a \emph{segment}. Let $S(\pi^m)$ denote the number of segments in $\pi^m$.
\end{definition}

Bounding the number of segments is similar conceptually to VC dimension in that it limits the ability of the policy class to realize arbitrary combinations of labels (i.e., mentor actions) on $\s$. For example, if $\Pi$ is the class of thresholds on $[0,1]$, every $\pi \in \Pi$ has at most two segments, and thus the positive result in this section will apply. This demonstrates the existence of policy classes which are learnable in our setting but not learnable in the standard online learning model, meaning that the two settings do not exactly coincide. 

Unlike our primary algorithm (\Cref{alg:nd}), this algorithm does require direct access to the input encoding. However, the point of this section is not to present a practical algorithm: it is simply to demonstrate that our setting and the standard online setting do not exactly coincide.

We prove the following regret bound. Like our previous results, this bound applies to both multiplicative and additive regret.

\begin{restatable}{theorem}{thmSimple}\label{thm:simple}
For any $\s \in \X^T$, any $\pi^m$ with $S(\pi^m) \le K$, and any function $g:\bbn\to\bbn$ satisfying $g(T) \ge 2L/\mum$, Algorithm~\ref{alg:1d} satisfies
\begin{align*}
\rmul \le&\ \frac{4LKT}{g(T)^2 \mum}\\
\rplus \le&\ \frac{2LKT}{ g(T)^2}\\
|Q_T| \le&\ (\diam(\s)+4)g(T)
\end{align*}
\end{restatable}

Choosing $g(T) = T^c$ for $c \in (1/2, 1)$ is sufficient for subconstant regret and sublinear queries:

\begin{theorem}\label{thm:simple-final}
For any $c \in (1/2,1)$, Algorithm~\ref{alg:1d} with $g(T) = T^c$ satisfies
\begin{align*}
\rmul \in&\ O\left(\frac{LKT^{1-2c}}{\mum}\right)\\
\rplus \in&\ O\left(LKT^{1-2c}\right)\\
|Q_T| \in&\ O(T^c (\diam(\s) +1))
\end{align*}
\end{theorem}


\renewcommand{\algorithmicendfor}{\textbf{end for}}%
\renewcommand{\algorithmicendif}{\textbf{end if}}%
\renewcommand{\algorithmicendfunction}{\textbf{end function}}%
\begin{algorithm}[ht!]
\centering
\begin{algorithmic}[1]
\FUNCTION{\textsc{DBWRQ}$(T \in \bbn,\: g: \bbn \to \bbn)$}
\STATE $X_Q \gets \emptyset$ \ (previously queried inputs)
\STATE $\pi \gets \emptyset$ \ (records $\pi^m(x)$ for each $x \in X_Q$)
\STATE $\B \gets \emptyset$ \ (The set of active buckets)
  \FOR{$t$ \textbf{from} $1$ \textbf{to} $T$}
  	\STATE \textsc{EvaluateInput}$(x_t)$
\ENDFOR
\ENDFUNCTION
\FUNCTION{\textsc{EvaluateInput}$(x \in \X)$}
    \IF {$x \not\in B$ for all $B \in \B$} 
        \STATE $B \gets \left[\frac{j-1}{g(T)}, \frac{j}{g(T)}\right]$ for $j \in \mathbb{Z}$ such that $x \in B$
        \STATE $\B \gets \B \cup \{B\}$
        \STATE $n_B \gets 0$ 
        \STATE \textsc{EvaluateInput}$(x)$
    \ELSE 
        \STATE $B \gets$ any bucket containing $x$
      	\IF {$X_Q \cap B = \emptyset$} 
    		\STATE Query mentor and observe $\pi^m(x)$
    		\STATE $\pi(x) \gets \pi^m(x)$ 
    		\STATE $X_Q \gets X_Q \cup \{x\}$
            \STATE $n_B \gets n_B + 1$
    	\ELSIF{$n_B < T/g(T)$} 
    		\STATE Let $x' \in X_Q \cap B$
    		\STATE Take action $\pi(x')$
    		\STATE $n_B \gets n_B + 1$
    	\ELSE 
    		\STATE $B = [a,b]$
    		\STATE $(B_1, B_2) \gets \Big(\left[a, \frac{a+b}{2}\right], \left[\frac{a+b}{2}, b\right]\Big)$
    	    \STATE $(x_{B_1}, x_{B_2}) \gets (0,0)$
    		\STATE $\B \gets \B \cup \{B_1, B_2\} \setminus B$
    		\STATE \textsc{EvaluateInput}$(x)$
    	\ENDIF
    \ENDIF
\ENDFUNCTION
\end{algorithmic}
\caption{achieves subconstant regret when the mentor's policy has a bounded number of segments.}
\label{alg:1d}
\end{algorithm}

\subsection{Intuition behind the algorithm}

We call our algorithm ``Dynamic Bucketing With Routine Querying", or DBWRQ (pronounced ``DBWRQ"). The algorithm maintains a set of buckets which partition the observed portion of the input space. Each bucket's length determines the maximum loss in payoff we will allow from that subset of the input space. As long as the bucket contains a query from a prior time step, local generalization allows us to bound $\mu_t^m(x_t) - \mu_t(x_t, y_t)$ based on the length of the bucket containing $x_t$. We always query if the bucket does not contain a prior query; in this sense the querying is ``routine''. 

The granularity of the buckets is controlled by a function $g$, with the initial buckets having length $1/g(T)$. Since we can expect one query per bucket, we need $g(T) \in o(T)$ to ensure sublinear queries.

Regardless of the bucket length, the adversary can still place multiple segments in the same bucket $B$. A single query only tells us the optimal action for one of those segments, so we risk a payoff as bad as $\mu_t^m(x_t) - O(\len(B))$ whenever we choose not to query. We can endure a limited number of such payoffs, but if we never query again in that bucket, we may suffer $\Theta(T)$ such payoffs. Letting $\mu_t^m(x_t) = 1$ for simplicity, that would lead to $\prod_{t=1}^T \mu_t(x_t, y_t) \le \big(1 - \frac{1}{O(g(T))}\big)^{\Theta(T)}$, which converges to 0 (i.e., guaranteed catastrophe) when $g(T) \in o(T)$.

This failure mode suggests a natural countermeasure: if we start to suffer significant (potential) losses in the same bucket, then we should probably query there again. One way to structure these supplementary queries is by splitting the bucket in half when enough time steps have involved that bucket. It turns out that splitting after $T/g(T)$ time steps is a sweet spot.

\subsection{Proof notation}\label{sec:1d-notation}

We will use the following notation throughout the proof of Theorem~\ref{thm:simple}:
\begin{enumerate}[leftmargin=1.5em,
    topsep=0.5ex,
    partopsep=0pt,
    parsep=0pt,
    itemsep=0.5ex]
\item Let $M_T = \{t \in [T]: \mu_t(x_t, y_t) < \mu_t^m(x_t)\}$ be the set of time steps with a suboptimal payoff.
\item Let $B_t$ be the bucket that is used on time step $t$ (as defined on line 16 of \Cref{alg:1d}).
\item Let $d(B)$ be the \emph{depth} of bucket $B$.
\begin{enumerate}
    \item Buckets created on line 11 are depth 0.
    \item We refer to $B_1, B_2$ created on line 28 as the children of the bucket $B$ defined on line 16.
    \item If $B'$ is the child of $B$, $d(B') = d(B) + 1$.
    \item Note that $\len(B) = \mfrac{1}{g(T) 2^{d(B)}}$.
\end{enumerate}
\item Viewing the set of buckets as a binary tree defined by the ``child" relation, we use the terms ``ancestor" and "descendant" in accordance with their standard tree definitions.
\item Let $\B_V = \{B: \exists t \in M_T \text{ s.t. }B_t = B\}$ be the set of buckets that ever produced a suboptimal payoff.
\item Let $\B_V' = \{B \in \B_V: \text{no descendant of $B$ is in $\B_V$}\}$. 
\end{enumerate}

\subsection{Proof roadmap}

The proof proceeds in the following steps:

\begin{enumerate}[leftmargin=1.5em,
    topsep=0.5ex,
    partopsep=0pt,
    parsep=0pt,
    itemsep=0.5ex]
    \item Bound the total number of buckets and therefore the total number of queries (Lemma~\ref{lem:pos-1d-queries}).
    \item Bound the suboptimality on a single time step based on the bucket length and $L$ (Lemma~\ref{lem:lipschitz-1d}).
    \item Bound the sum of bucket lengths on time steps where we make a mistake (Lemma~\ref{lem:pos-1d-bucket-lengths}), with Lemma~\ref{lem:pos-1d-bad-buckets} as an intermediate step. This captures the ``total amount of suboptimality''.
    \item \Cref{lem:pos-1d-regret} uses \Cref{lem:lipschitz-1d} and \Cref{lem:pos-1d-bucket-lengths} to bound the regret.
    \item Theorem~\ref{thm:simple} directly follows from Lemmas~\ref{lem:pos-1d-queries} and \ref{lem:pos-1d-regret}.
\end{enumerate}

\subsection{Proof}

\begin{lemma}\label{lem:pos-1d-queries}
Algorithm~\ref{alg:1d} performs at most $(\diam(\s) + 4)g(T)$ queries.
\end{lemma}

\begin{proof}
Algorithm~\ref{alg:1d} performs at most one query per bucket, so the total number of queries is bounded by the total number of buckets. There are two ways to create a bucket: from scratch (line 11), or by splitting an existing bucket (line 28). 

Since depth 0 buckets overlap only at their boundaries, and each depth 0 bucket has length $1/g(T)$, at most $g(T)\max_{t,t' \in [T]}|x_t - x_{t'}| = g(T)\diam(\s)$ depth 0 buckets are subsets of the interval $[\min_{t \in [T]} x_t,\max_{t\in[T]} x_t]$. At most two depth 0 buckets are not subsets of that interval (one at each end), so the total number of depth 0 buckets is at most $g(T) \diam(\s) + 2$.

We split a bucket $B$ when $n_B$ reaches $T/g(T)$, which creates two new buckets. Since each time step increments $n_B$ for a single bucket $B$,  and there are a total of $T$ time steps, the total number of buckets created via splitting is at most $\mfrac{2T}{T/g(T)} = 2g(T)$. Therefore the total number of buckets ever in existence is $(\diam(\s) + 2)g(T) + 2 \le (\diam(\s) + 4)g(T)$, so Algorithm~\ref{alg:1d} performs at most $(\diam(\s) + 4)g(T)$ queries.
\end{proof}

\begin{lemma}\label{lem:lipschitz-1d}
For each $t \in [T]$, $\mu_t(x_t, y_t) \ge \mu_t^m(x_t) - L \len(B_t)$.
\end{lemma}

\begin{proof}
If we query at time $t$, then $\mu_t(x_t, y_t) = \mu_t^m(x_t)$. Thus assume we do not query at time $t$: then there exists $x' \in B_t$ (as defined on line 23 of Algorithm~\ref{alg:1d}) such that $y_t = \pi(x') = \pi^m(x')$. Since $x_t$ and $x'$ are both in $B_t$, $|x_t - x'| \le \len(B_t)$. Then by local generalization, $\mu_t(x_t, y_t) = \mu_t(x_t, \pi^m(x')) \ge \mu_t^m(x_t) - L \norm{x_t-x'} \ge \mu_t^m(x_t) - L \len(B_t)$.
\end{proof}

\begin{lemma}\label{lem:pos-1d-bad-buckets}
If $\pi^m$ has at most $K$ segments, $|\B_V'| \le K$.
\end{lemma}

\begin{proof}
Now consider any $B \in \B_V'$. By definition of $\B_V'$, there exists $t \in M_T$  such that $x_t \in B$. Then there exists $x' \in B$ (as defined in Algorithm~\ref{alg:1d}) such that $y_t = \pi(x') = \pi^m(x')$. Since $t \in M_T$, we have  $ \pi^m(x_t) \ne y_t = \pi^m(x')$. Thus $x_t$ and $x'$ are in different segments, but are both in $B$. Therefore any $B \in \B_V'$ must intersect at least two segments. Since $B$ is an interval, if it intersects two segments, it must intersect two adjacent segments $X_j$ and $X_{j+1}$. Furthermore, $B$ must contain an open neighborhood centered on the boundary between $X_j$ and $X_{j+1}$.

Now consider some $B' \in \B_V'$ with $B \ne B'$. We have $|B\cap B'| \le 1$: otherwise one must be the descendant of the other, which contradicts the definition of $\B_V'$. Suppose $B'$ also intersects both $X_j$ and $X_{j+1}$: since $B'$ is also an interval, $B'$ must also contain an open neighborhood centered on the boundary between those two segments. But then $|B\cap B'| > 1$, which is a contradiction. \looseness=-1

Therefore for any pair of adjacent segments $X_j$ and $X_{j+1}$, there is at most one bucket in $\B_V'$ which contains an open neighborhood around their boundary. Since there are at most $K-1$ pairs of adjacent segments, we have $|\B_V'| \le K-1 \le K$.\looseness=-1
\end{proof}
\begin{lemma}\label{lem:pos-1d-bucket-lengths}
We have $\sum\limits_{t \in M_T} \len(B_t) \le \mfrac{2KT}{g(T)^2}$.
\end{lemma}

\begin{proof}
For every $t \in M_T$, we have $B_t = B$ for some $B \in \B_V$, so
\[
\sum_{t \in M_T} \len(B_t) = \sum_{B \in \B_V}\ \sum_{t \in M_T: B=B_t}\len(B_t)
\]
Next, observe that every $B \in \B_V \setminus \B_V'$ must have a descendant in $\B_V'$: otherwise we would have $B \in \B_V'$. Let $\mathcal{A}(B)$ denote the set of ancestors of $B$, plus $B$ itself. Then we can write
\begin{align*}
\sum_{t \in M_T} \len(B_t) \le&\ \sum_{B' \in \B_V'}\ \sum_{B \in \mathcal{A}(B')}\ \sum_{t \in M_T: B=B_t}\len(B_t)\\ 
=&\ \sum_{B' \in \B_V'}\ \sum_{B \in \mathcal{A}(B')}\ |\{t \in M_T: B=B_t\}| \cdot \len(B_t)
\end{align*}
For any bucket $B$, the number of time steps $t$ with $B = B_t$ is at most $T/g(T)$. Also recall that $\len(B) = \mfrac{1}{g(T) 2^{d(B)}}$, so
\begin{align*}
\sum_{B \in \mathcal{A}(B')} \frac{|\{t \in M_T: B=B_t\}|}{g(T) 2^{d(B)}} \le&\ \frac{T}{g(T)^2}\sum_{B \in \mathcal{A}(B')}\ \frac{1}{2^{d(B)}}\\
=&\ \frac{T}{g(T)^2} \sum_{d=0}^{d(B')} \frac{1}{2^d}\\
\le&\ \frac{T}{g(T)^2} \sum_{d=0}^{\infty} \frac{1}{2^d}\\
=&\ \frac{2T}{g(T)^2}
\end{align*}
Then by Lemma~\ref{lem:pos-1d-bad-buckets}, 
\[
\sum_{t \in M_T} \len(B_t) \le \sum_{B' \in \B_V'} \frac{2T}{g(T)^2} = \frac{2T|\B_V'|}{g(T)^2} \le \frac{2KT}{g(T)^2}
\]
as claimed.
\end{proof}

\begin{lemma}
\label{lem:pos-1d-regret}
Under the conditions of \Cref{thm:simple}, Algorithm~\ref{alg:1d} satisfies
\begin{align*}
\rmul \le&\ \frac{2LKT}{\mum g(T)^2}\\
\rplus \le&\ \frac{2LKT}{g(T)^2}
\end{align*}
\end{lemma}

\begin{proof}
We have
\begin{align*}
\rplus \le&\ \sum_{t \in M_T}(\mu_t^m(x_t) - \mu_t(x_t,y_t)) && (\text{$\mu_t(x_t,y_t)\ge \mu_t^m(x_t)$ for $t \not\in M_T$})\\
\le&\ \sum_{t \in M_T} L\len(B_t) && (\text{\Cref{lem:lipschitz-1d}})\\
\le&\ \frac{2LKT}{g(T)^2} && (\text{\Cref{lem:pos-1d-bucket-lengths}})
\end{align*}
Since $g(T) \ge 2L/\mum$ and every bucket length is at most $\frac{1}{g(T)}$, 
\begin{align*}
\mu_t(x_t,y_t) \ge&\ \mu_t^m(x_t) - L \len(B_t)\\
\ge&\ \mum - \frac{L}{g(T)}\\
\ge&\ \mum - \mum/2\\
\ge&\ \mum/2\\
>&\ 0
\end{align*}
Invoking \Cref{lem:prod-vs-add} completes the proof:
\[
\rmul \le \frac{2\rplus}{\mum} \le \frac{4LKT}{g(T)^2 \mum}
\]
\end{proof}

\Cref{thm:simple} follows from \Cref{lem:pos-1d-queries} and \Cref{lem:pos-1d-regret}.

\section{Properties of local generalization} 

\Cref{prop:lipschitz} states that Lipschitz continuity implies local generalization when the mentor is optimal.

\begin{proposition}
\label{prop:lipschitz}
Assume that $\mu$ satisfies Lipschitz continuity: for all $x,x' \in \X$ and $y \in \Y$, $|\mu(x,y) - \mu(x',y)| \le L \norm{x-x'}$. Also assume that $\mu(x, \pi^m(x)) = \max_{y \in \Y} \mu(x,y)$ for all $x \in \X$. Then $\mu$ satisfies local generalization with constant $2L$.
\end{proposition}

\begin{proof}
For any $x, x' \in \X$, we have
\begin{align*}
\mu(x, \pi^m(x')) \ge&\ \mu(x', \pi^m(x')) - L\norm{x- x'} &&\ \text{(Lipschitz continuity of $\mu$)}\\
\ge&\ \mu(x', \pi^m(x)) - L\norm{x-x'} &&\ \text{($\pi^m$ is optimal for $x'$)}\\
\ge&\ \mu(x, \pi^m(x)) - 2L\norm{x-x'} &&\ \text{(Lipschitz continuity of $\mu$ again)}\\
=&\ \mu^m(x) - 2L\norm{x-x'} &&\ \text{(Definition of $\mu^m(x)$)}
\end{align*}
Since $\pi^m$ is optimal for $x$, we have
\[
\mu^m(x) + 2L\norm{x-x'} \ge \mu^m(x) \ge \mu(x, \pi^m(x'))
\]
So $-2L\norm{x-x'} \le \mu(x,\pi^m(x')) - \mu^m(x) \le 2L\norm{x-x'}$ and therefore $|\mu(x,\pi^m(x')) - \mu^m(x)| \le 2L\norm{x-x'}$.\looseness=-1
\end{proof}

\Cref{thm:no-lg} shows that avoiding catastrophe is impossible without local generalization, even when $\s$ is $\sigma$-smooth and $\Pi$ has finite VC dimension. The first insight is that without local generalization, we can define $\mu(x,y) = \bfone(y = \pi^m(x))$ so that a single mistake causes $\prod_{t=1}^T \mu(x_t, y_t) = 0$. To lower bound $\Pr\big[\prod_{t=1}^T \mu(x_t, y_t) = 0\big]$, we use a similar approach to the proof of \Cref{thm:neg}: divide $\X = [0,1]$ into $f(T)$ independent sections with $|Q_T| << f(T) << T$, so that the agent can only query a small fraction of these sections. However, the proof of \Cref{thm:no-lg} is a bit easier, since we only need the agent to make a single mistake.

Note that \Cref{thm:no-lg} as stated only provides a bound on $\rmul$. A similar bound can be obtained for $\rplus$, but it is more tedious and we do not believe it would add much to the paper.

\begin{theorem}
\label{thm:no-lg}
Let $\X = [0,1]$ and $\Y = \{0,1\}$. Let each input be sampled i.i.d. from the uniform distribution on $\X$ and define the mentor policy class as the set of intervals within $\X$, i.e., $\Pi = \{\pi: \exists a,b \in [0,1] \text{ s.t } \pi(x) = \bfone(x \in [a,b])\ \forall x\in \X\}$. Then without the local generalization assumption, any algorithm with sublinear queries satisfies $\lim_{T\to\infty} \sup_{\bfmu, \pi^m} \E[\rmul] = \infty$.
\end{theorem}
\begin{proof}
\textbf{Part 1: Setup.} Consider any algorithm which makes sublinear worst-case queries: then there exists $g: \bbn \to \bbn$ where $\sup_{\bfmu, \pi^m} \E[|Q_T|] \le g(T)$ and $g(T) \in o(T)$. WLOG assume $g(T) \ge 0$ for all $T$; if not, redefine $g(T)$ to be $\max(g(T) , 1)$.

Define $f(T):= \lceil \sqrt{g(T)T}\rceil$; by \Cref{lem:density}, $g(T) \in o(f(T))$ and $f(T) \in o(T)$. Divide $\X$ into $f(T)$ equally sized sections $X_1,\dots,X_{f(T)}$ in exactly the same way as in \Cref{sec:neg-construction}; see also \Cref{fig:basic}. Assume that each $x_t$ is in exactly one section: this assumption holds with probability 1, so it does not affect the regret.

We use the probabilistic method: sample a segment $j^m \in [f(T)]$ uniformly at random, define $\pi^m$ by $\pi^m(x) = \bfone(x \in X_{j^m})$, and define $\mu$ by $\mu(x,y) = \bfone(y = \pi^m(x))$. In words, the mentor takes action 1 iff the input is in section $j^m$, and the agent receives payoff 1 if its action matches the mentor's and zero otherwise. Since any choice of $j^m$ defines a valid $\mu$ and $\pi^m$,
\begin{align*}
\sup_{\bfmu, \pi^m}\ \E_{\s, \A}\ [\rplus(\s, \A, \bfmu, \pi^m)] \ge \E_{j^m}\ \E_{\s, \A}\ [\rplus(\s, \A, (\mu,\dots,\mu), \pi^m)]
\end{align*}
Let $J_{\neg Q} = \{j \in [f(T)]: x_t \not\in X_j\ \forall t \in Q_T\}$ be the set of sections which are never queried. Let $j_1,\dots,j_k$ be the sequence of sections queried by the agent: then $k = |Q_T| \le g(T)$. 

\textbf{Part 2: The agent is unlikely to determine $j^m$.} By the chain rule of probability,
\begin{align*}
\Pr[j^m \in J_{\neg Q}] = \Pr\big[j_i \ne j^m\ \forall i \big] = \prod_{i=1}^{k} \Pr\big[j_i \ne j^m \mid j_r \ne j^m\ \forall r < i\big]
\end{align*}
Now fix $i$ and assume $j_r \ne j^m\ \forall r  < i$. Queries in sections other than $j^m$ provide no information about the value of $j^m$, so $j^m$ is uniformly distributed across the set of sections not yet queried, i.e., $\{j \in [f(T)]: j_r \ne j\ \forall r < i\}$. There are at least $f(T) - i + 1$ such sections, since there are $i-1$ prior queries at this point. Thus $\Pr[j_i \ne j^m \mid j_r \ne j^m\ \forall r < i] \ge \frac{f(T) -i}{f(T)-i + 1}$ (the inequality is because this probability could also be 1 if $j_i = j_r$ for some $i < r$). Therefore
\begin{align*}
\Pr\big[j^m \in J_{\neg Q} \big] \ge&\ \prod_{i=1}^{k} \frac{f(T) -i }{f(T)-i+1}\\
=&\ \frac{f(T)-1}{f(T)} \cdot \frac{f(T)-2}{f(T)-1} \dots \frac{f(T)-k + 1}{f(T)-k+2} \cdot \frac{f(T)-k}{f(T)-k+1}\\
=&\ \frac{f(T) - k}{f(T)}\\
\ge&\ 1 - \frac{g(T)}{f(T)}
\end{align*}
\textbf{Part 3: If the agent fails to determine $j^m$, it is likely to make at least one mistake.} For each $j \in J_{\neg Q}$, let $V_j = \{t \in [T]: x_t \in X_j\}$ be the set of time steps with inputs in section $j$. By \Cref{lem:concentrate}, $\Pr[|V_{j^m}| = 0] \le \exp\big(\frac{T}{16f(T)}\big)$. Then by the union bound, $\Pr[j^m \in J_{\neg Q} \text{ and } |V_{j^m}| > 0] \ge 1 - \frac{g(T)}{f(T)} - \exp\big(\frac{-T}{16f(T)}\big)$. For the rest of Part 3, assume $j^m \in J_{\neg Q} \text{ and } |V_{j^m}| > 0$. 

\emph{Case 1:} For all $j \in J_{\neg Q}$ and $t \in V_j$, we have $y_t = 0$. In particular, this holds for $j = j^m$, and we know there exists at least one $t \in V_{j^m}$ since $|V_{j^m}| > 0$. Then $y_t \ne \pi^m(x_t)$, so $\mu(x_t, y_t) = 0$ and thus $\Pr\left[\prod_{r=1}^T \mu(x_r,y_r) = 0 \: \Big| \: j^m \in J_{\neg Q} \text{ and } |V_{j^m}| > 0 \text{ and } y_t = 0\ \forall j \in J_{\neg Q}, t\in V_j \right] = 1$.

\emph{Case 2:} There exists $j \in J_{\neg Q}$ and $ t \in V_j$ with $y_t = 1$. Then $\mu(x_t, y_t ) = 0$ unless $j = j^m$, so
\begin{align*}
&\ \Pr\left[\prod_{r=1}^T \mu(x_r,y_r) = 0 \: \Big| \:j^m \in J_{\neg Q} \text{ and } |V_{j^m}| >0 \text{ and } \exists j \in J_{\neg Q}, t\in V_j \text{ s.t. } y_t = 1 \right]\\
\ge&\ \Pr\Big[\mu(x_t, y_t) = 0\ \Big|\ j^m \in J_{\neg Q} \text{ and } |V_{j^m}| >0\text{ and } \exists j \in J_{\neg Q}, t\in V_j \text{ s.t. } y_t = 1\Big]\\
=&\ \Pr\Big[j \ne j^m \ \Big|\ j^m \in J_{\neg Q} \text{ and } |V_{j^m}| >0\text{ and } \exists j \in J_{\neg Q}, t\in V_j \text{ s.t. } y_t = 1\Big]
\end{align*}
Since $j^m \in J_{\neg Q}$, the agent has no information about $j^m$ other than that it is in $J_{\neg Q}$. This means that $j^m$ is uniformly distributed across $J_{\neg Q}$, so
\begin{align*}
\Pr\left[\prod_{r=1}^T \mu(x_r,y_r) = 0 \: \Big| \:j^m \in J_{\neg Q} \text{ and } |V_{j^m}| >0\text{ and } \exists j \in J_{\neg Q}, t\in V_j \text{ s.t. } y_t = 1\right] \ge 1 - \frac{1}{|J_{\neg Q}|} \ge 1 - \frac{1}{f(T) - g(T)}
\end{align*}
Combining Case 1 and Case 2, we get the overall bound of
\[
\Pr\left[\prod_{t=1}^T \mu(x_t,y_t) = 0 \: \Big| \: j^m \in J_{\neg Q} \text{ and } |V_{j^m}| > 0 \right] \ge 1 - \frac{1}{f(T) - g(T)}
\]
and thus
\begin{align*}
 \Pr\left[\prod_{t=1}^T \mu(x_t,y_t) = 0 \right] \ge&\ \Pr\left[\prod_{t=1}^T \mu(x_t,y_t) = 0 \text{ and } j^m \in J_{\neg Q} \text{ and } |V_{j^m}| > 0 \right]\\
=&\ \Pr\left[\prod_{t=1}^T \mu(x_t,y_t) = 0 \: \Big| \: j^m \in J_{\neg Q} \text{ and } |V_{j^m}| > 0 \right] \cdot \Pr\Big[j^m \in J_{\neg Q} \text{ and } |V_{j^m}| > 0 \Big]\\
\ge&\ \left(1 - \frac{1}{f(T) - g(T)}\right) \left(1 - \frac{g(T)}{f(T)} - \exp\left(\frac{-T}{16f(T)}\right)\right)
\end{align*}
For brevity, let $\alpha(T)$ denote this final bound. Since $g(T) \in o(f(T))$ and $f(T) \in o(T)$, we have 
\begin{align*}
\lim_{T\to\infty} \alpha(T) = \lim_{T\to\infty} \left(1 - \frac{1}{f(T) - g(T)}\right) \left(1 - \frac{g(T)}{f(T)} - \exp\left(\frac{-T}{16f(T)}\right)\right)
= (1-0)(1-0 -0)
= 1
\end{align*}
\textbf{Part 4: Putting it all together.} Consider any $\ep  \in (0,1]$; to avoid dealing with infinite expectations, we will deal with $\Pr[\prod_{t=1}^T \mu(x_t,y_t) \le \ep]$ instead of $\Pr[\prod_{t=1}^T \mu(x_t,y_t) = 0]$. Since $\prod_{t=1}^T \mu(x_t, y_t) \le 1$ always, we have
\begin{align*}
\E_{j^m}\ \E_{\s, \A}\ \left[\log\prod_{t=1}^T \mu(x_t,y_t) \right] =&\ \E_{j^m}\ \E_{\s, \A}\ \left[\log \prod_{t=1}^T \mu(x_t,y_t) \: \Big| \: \prod_{t=1}^T \mu(x_t,y_t) \le \ep \right] \cdot \Pr\left[\prod_{t=1}^T \mu(x_t,y_t) \le \ep \right]\\
 +&\ \E_{j^m}\ \E_{\s, \A}\ \left[\log \prod_{t=1}^T \mu(x_t,y_t) \: \Big| \: \prod_{t=1}^T \mu(x_t,y_t) > \ep \right] \cdot \Pr\left[\prod_{t=1}^T \mu(x_t,y_t) > \ep \right]\\
\le&\log \ep   \cdot \Pr\left[\prod_{t=1}^T \mu(x_t,y_t) \le \ep \right]+ 1 \cdot \left(1 - \Pr\left[\prod_{t=1}^T \mu(x_t,y_t) \le \ep \right]\right)\\
\le&\ 1 - (1 - \log \ep ) \Pr\left[\prod_{t=1}^T \mu(x_t,y_t) \le \ep \right]
\end{align*}
Since $\ep \in (0,1]$, we have  $1 - \log \ep  > 0$. Also, $\Pr[\prod_{t=1}^T \mu(x_t,y_t) \le \ep] \ge \Pr[\prod_{t=1}^T \mu(x_t,y_t) = 0]$, so
\begin{align*}
\E_{j^m}\ \E_{\s, \A}\ \left[\log\prod_{t=1}^T \mu(x_t,y_t) \right] \le&\ 1 - (1 - \log \ep ) \Pr\left[\prod_{t=1}^T \mu(x_t,y_t) \le \ep \right]\\
\le&\ 1 - (1 - \log \ep ) \Pr\left[\prod_{t=1}^T \mu(x_t,y_t) =0  \right]\\
\le&\ 1 - (1 - \log \ep ) \alpha(T) 
\end{align*}
Since $\prod_{t=1}^T \mu^m(x_t) = 1$ always, we have
\begin{align*}
\sup_{\bfmu, \pi^m}\ \E_{\s, \A}\ [\rplus(\s, \A, \bfmu, \pi^m)] \ge&\ \E_{j^m}\ \E_{\s, \A}\ [\rplus(\s, \A, (\mu,\dots,\mu), \pi^m)]\\
=&\ \log 1 - \E_{j^m} \E_{\s, \A} \left[\log \prod_{t=1}^T \mu(x_t,y_t) \right]\\
\ge&\ - 1 + (1 - \log \ep ) \alpha(T)
\end{align*}
Therefore
\begin{align*}
\lim_{T\to\infty} \sup_{\bfmu, \pi^m}\ \E_{\s, \A}\ [\rplus(\s, \A, \bfmu, \pi^m)] \ge&\ - 1 + (1 - \log \ep ) \lim_{T\to\infty} \alpha(T)\\
\ge&\ - 1 + (1 - \log \ep )\\
\ge&\ -\log \ep
\end{align*}
This holds for every $\ep \in (0,1]$, which is only possible if $\lim_{T\to\infty} \sup_{\bfmu, \pi^m}\ \E_{\s, \A}\ [\rplus(\s, \A, \bfmu, \pi^m)] = \infty$, as desired.
\end{proof}

\end{document}